


\documentclass[conference]{IEEEtran}

\IEEEoverridecommandlockouts                              





\usepackage{multirow}

\usepackage{graphicx}
\usepackage{cite}
\usepackage{xcolor}
\usepackage{hyperref}
\usepackage[per-mode=symbol]{siunitx}
\usepackage{proof}
\usepackage{bussproofs}
\usepackage{mathpartir}
\usepackage{rotating}
\usepackage{booktabs}
\usepackage{subcaption}

\usepackage{algorithm2e}
\SetKwComment{Comment}{/* }{ */}
\RestyleAlgo{ruled}

\usepackage[all]{xy} \SelectTips{cm}{}  
\newdir{ >}{{}*!/-8pt/@{>}}
\newdir{|>}{%
	!/1.6pt/@{|}*:(1,-.2)@^{>}*:(1,+.2)@_{>}}

\usepackage{todonotes}

\usepackage{mathtools}
\usepackage{hyperref}
\usepackage[capitalize, nameinlink]{cleveref}

\usepackage{aliascnt}
\usepackage[appendix=strip]{apxproof}

\theoremstyle{plain}
\newtheoremrep{theorem}{Theorem}[section]
\Crefname{theorem}{Thm.}{Thm.}
\newaliascnt{propositioncnt}{theorem}
\newtheoremrep{proposition}[propositioncnt]{Proposition}
\Crefname{propositioncnt}{Prop.}{Prop.}
\newaliascnt{lemmacnt}{theorem}
\newtheoremrep{lemma}[lemmacnt]{Lemma}
\Crefname{lemmacnt}{Lem.}{Lem.}
\newaliascnt{corollarycnt}{theorem}
\newtheoremrep{corollary}[corollarycnt]{Corollary}
\Crefname{corollarycnt}{Cor.}{Cor.}
\newaliascnt{factcnt}{theorem}

\Crefname{factcnt}{Fact}{Fact}
\newaliascnt{assumptioncnt}{theorem}

\Crefname{assumptioncnt}{Asm.}{Asm.}
\theoremstyle{definition}
\newaliascnt{remarkcnt}{theorem}
\newtheorem{remark}[remarkcnt]{Remark}
\Crefname{remarkcnt}{Rem.}{Rem.}
\newaliascnt{notationcnt}{theorem}

\Crefname{notationcnt}{Notation}{Notation}

\newaliascnt{requirementcnt}{theorem}

\Crefname{requirementcnt}{Requirement}{Requirement}
\newaliascnt{requirementscnt}{theorem}

\Crefname{requirementscnt}{Requirements}{Requirements}

\Crefname{researchquestioncnt}{RQ}{RQ}

\theoremstyle{definition}
\newaliascnt{definitioncnt}{theorem}
\newtheoremrep{definition}[definitioncnt]{Definition}
\Crefname{definitioncnt}{Def.}{Def.}
\newaliascnt{examplecnt}{theorem}
\newtheoremrep{example}[examplecnt]{Example}
\Crefname{examplecnt}{Example}{Examples}

\crefformat{section}{\S{}#2#1#3}
\Crefname{table}{Table}{Table}
\Crefname{figure}{Fig.}{Fig.}
\Crefname{equation}{}{}
\Crefname{line}{Line}{Line}

\usepackage{stmaryrd,amssymb,amsmath}

\usepackage{mathrsfs}

\usepackage{wrapfig}
\setlength{\intextsep}{.1\intextsep}
\setlength{\columnsep}{.7\columnsep}

%
%
\usepackage{comment}
\specialcomment{auxproof}
{\mbox{}\newline\textbf{BEGIN: AUX-PROOF}\dotfill\newline}
{\mbox{}\newline\textbf{END: AUX-PROOF}\dotfill\newline}
\excludecomment{auxproof}

\newcommand{\dHL}{\ensuremath{\mathrm{dFHL}}}
\newcommand{\dHLFB}{\ensuremath{\mathrm{dFHL}^{\downarrow}}}

\newcommand{\xtgt}{y_{\mathrm{tgt}}}
\newcommand{\ytgt}{y_{\mathrm{tgt}}}
\newcommand{\bmax}{b_{\mathrm{max}}}
\newcommand{\bmin}{b_{\mathrm{min}}}
\newcommand{\amax}{a_{\mathrm{max}}}
\newcommand{\vmin}{v_{\mathrm{min}}}
\newcommand{\vmax}{v_{\mathrm{max}}}

\newcommand{\cruise}{\mathtt{cruise}}
\newcommand{\brake}{\mathtt{brake}}

\newcommand{\DM}{DM}
\newcommand{\AC}{AC}

\newcommand{\POV}[1]{\ensuremath{\mathsf{POV}{#1}}}
\newcommand{\SV}{\ensuremath{\mathsf{SV}}}

\newcommand{\aheadSL}{\ensuremath{\mathsf{aheadSL}}}
\newcommand{\behindSL}{\ensuremath{\mathsf{behindSL}}}
\newcommand{\true}{\ensuremath{\mathsf{true}}}
\newcommand{\false}{\ensuremath{\mathsf{false}}}

\newcommand{\BC}{BC}

\newcommand{\dRSS}{\ensuremath{\mathsf{dRSS}}}

\newcommand{\Safe}{\mathsf{Safe}}
\newcommand{\Env}{\mathsf{Env}}
\newcommand{\Goal}{\mathsf{Goal}}

\newcommand{\Variables}{\ensuremath{V}}
\newcommand{\expa}{\ensuremath{e}}

\newcommand{\val}{\ensuremath{v}}
\newcommand{\var}{\ensuremath{x}}
\newcommand{\term}{\ensuremath{e}}
\newcommand{\vars}{\ensuremath{\mathbf{x}}}
\newcommand{\funs}{\ensuremath{\mathbf{f}}}
\newcommand{\invariant}{\ensuremath{e_\mathsf{inv}}}

\newcommand{\variant}{\ensuremath{e_\mathsf{var}}}

\newcommand{\terminator}{\ensuremath{e_\mathsf{ter}}}

\newcommand{\asserta}{\ensuremath{A}}
\newcommand{\assertb}{\ensuremath{B}}
\newcommand{\assertc}{\ensuremath{C}}
\newcommand{\assertd}{\ensuremath{D}}

\newcommand{\subst}[3]{{#1}[{#2} / {#3}]}

\newcommand{\coma}{\alpha}
\newcommand{\comb}{\beta}

\newcommand{\dynamics}[2]{\delta_{#1}^{#2}}

\newcommand{\skipClause}{\mathsf{skip}}
\newcommand{\seqClause}[2]{{#1};{#2}}
\newcommand{\assignClause}[2]{#1 \mathop{{:}{=}} #2}

\newcommand{\dwhileKeyword}{\mathsf{dwhile}}
\newcommand{\dwhileHeader}[1]{\dwhileKeyword\,(#1)}
\newcommand{\dwhileClause}[2]{\dwhileHeader{#1}\,\{#2\}}

\newcommand{\odeClause}[2]{\dot{#1} = #2}

\newcommand{\whileKeyword}{\mathsf{while}}
\newcommand{\whileHeader}[1]{\whileKeyword\,(#1)}
\newcommand{\whileClause}[2]{\whileHeader{#1}\,\{#2\}}

\newcommand{\ifHeader}[1]{\mathsf{if}\,(#1)}
\newcommand{\elseKeyword}{\mathsf{else}}
\newcommand{\ifThenElse}[3]{\ifHeader{#1}\,\{#2\}\,\elseKeyword{}\,\{#3\}}


\newcommand{\limply}{\Rightarrow}
\newcommand{\bigland}{\bigwedge}

\newcommand{\skipcrule}{\textsc{Skip}}
\newcommand{\seqrule}{\textsc{Seq}}
\newcommand{\assignrule}{\textsc{Assign}}
\newcommand{\ifrule}{\textsc{If}}
\newcommand{\whilerule}{\textsc{W}}
\newcommand{\dwhilerule}{\textsc{DW}}

\newcommand{\limprule}{\textsc{LImp}}

\newcommand{\store}{\ensuremath{\rho}}
\newcommand{\update}[3]{\ensuremath{{#1}[{#2} \to {#3}]}}

\newcommand{\sem}[2]{\ensuremath{\left\llbracket {#1} \right\rrbracket_{#2}}}

\newcommand{\red}[1]{\to}

\newcommand{\lieder}[3]{\mathcal{L}_{\odeClause{#1}{#2}}\,#3}

\newcommand{\set}[1]{\left\{ {#1} \right\}}
\newcommand{\setcomp}[2]{\left\{ {#1} \,  \middle| \, {#2} \right\}}

\newcommand{\closure}[1]{cl(#1)}

\newcommand{\N}{\ensuremath{\mathbb{N}}}
\newcommand{\R}{\ensuremath{\mathbb{R}}}
\newcommand{\Var}{\ensuremath{\mathbf{Var}}}



\newcommand{\KeYmaeraX}{\textsc{KeYmaera~X}}  

\definecolor{darkgreen}{rgb}{0,0.5,0}
\newcommand{\improvement}[1]{}
\newcommand{\todoproof}[1]{}

\newcommand{\asLongAsClause}[2]{{#1}\downarrow{#2}}
\newcommand{\switchClause}[3]{{#1}\rightarrow_{#3}{#2}}
\newcommand{\simplexClause}[4]{{#1}\begin{smallmatrix}{}^{#4}\!\leftarrow\\\rightarrow_{#3}\end{smallmatrix}{#2}}
\newcommand{\hquin}[5]{{#1}:\left[{#2}\right]\,{#3}\,\left[{#4}\right]:{#5}}

\newcommand{\trace}{\ensuremath{\sigma}}
\newcommand{\Environment}{\ensuremath{E}}

\newcommand{\restr}[2]{{#1}_{|{#2}}}

\newcommand{\xpov}{\ensuremath{x_{\POV{}}}}
\newcommand{\dxpov}{\ensuremath{\dot{x}_{\POV{}}}}
\newcommand{\vpov}{\ensuremath{v_{\POV{}}}}
\newcommand{\dvpov}{\ensuremath{\dot{v}_{\POV{}}}}
\newcommand{\apov}{\ensuremath{a_{\POV{}}}}

\renewcommand{\xtgt}{x_{\mathrm{tgt}}}

\newcommand{\ifThen}[2]{\ifHeader{#1}\,\{#2\}}
\renewcommand{\limprule}{\ensuremath{\limply}}
\newcommand{\untilrule}{\ensuremath{\downarrow}}
\newcommand{\botrule}{\ensuremath{\bot}}

\DeclareMathOperator{\traceend}{end}

\renewcommand{\closure}[1]{\overline{#1}}

\newcommand{\Nbar}{\ensuremath{\overline{\N}}}
\newcommand{\eqv}{\ensuremath{\sim}}
\newcommand{\abs}[1]{\left|{#1}\right|}

\newcommand{\Logic}{\ensuremath{C}}
\newcommand{\Physical}{\ensuremath{P}}

\newcommand{\concatbin}[2]{\ensuremath{{#1} \cdot {#2}}}
\newcommand{\concatfinsym}{\ensuremath{\odot}}
\newcommand{\concatfin}[3]{\ensuremath{\concatfinsym_{{#2}=0}^{#3} {#1}_{#2}}}
\newcommand{\concatinf}[2]{\ensuremath{\concatfinsym_{{#2}=0}^{+\infty} {#1}_{#2}}}
\newcommand{\tracezero}[1]{\ensuremath{\delta_{#1}}}

\DeclareMathOperator{\dom}{dom}

\DeclareMathOperator{\RSS}{RSS}
\DeclareMathOperator{\GARSS}{GA-RSS}
\DeclareMathOperator{\CARSS}{CA-RSS}
\newcommand{\comalayer}{\ensuremath{\coma_l}}

\renewcommand{\epsilon}{\varepsilon}

\usepackage{tikz}
\usetikzlibrary{calc,fit}

\tikzset{fit margins/.style={/tikz/afit/.cd,#1,
    /tikz/.cd,
    inner xsep=\pgfkeysvalueof{/tikz/afit/left}+\pgfkeysvalueof{/tikz/afit/right},
    inner ysep=\pgfkeysvalueof{/tikz/afit/top}+\pgfkeysvalueof{/tikz/afit/bottom},
    xshift=-\pgfkeysvalueof{/tikz/afit/left}+\pgfkeysvalueof{/tikz/afit/right},
    yshift=-\pgfkeysvalueof{/tikz/afit/bottom}+\pgfkeysvalueof{/tikz/afit/top}},
    afit/.cd,left/.initial=2pt,right/.initial=2pt,bottom/.initial=2pt,top/.initial=2pt}

\newcommand{\newFallbackClauseFull}[5]{%
\begin{tikzpicture}[baseline=(coma.base)]
  \node[fit margins={left=1pt,right=0pt,top=1pt,bottom=1pt},outer sep=0] (coma) {\ensuremath{#1}};
  \node[fit margins={left=0pt,right=1pt,top=1pt,bottom=1pt},outer sep=0,anchor=base] (comb) at ($($(coma.east)!(coma.base)!(coma.south east)$)+(#5,0)$) {\ensuremath{#2}};
  \node[inner sep=0,outer sep=0] (comab) at (coma.east) {};
  \node[inner sep=0,outer sep=0] (combb) at (comb.west) {};
  \draw[->] (comab)
    to node[fit margins={left=0pt,right=0pt,top=0pt,bottom=1pt},outer sep=0,below,anchor=north] {\scriptsize{\ensuremath{#3}}}
       node[fit margins={left=0pt,right=0pt,top=0pt,bottom=1pt},outer sep=0,below,anchor=north,pos=1] {\scriptsize{\ensuremath{#4}}}
    (combb);
\end{tikzpicture}%
}

\newcommand{\newFallbackClauseDefaultLength}{0.8}
\newcommand{\newFallbackClause}[4]{\newFallbackClauseFull{#1}{#2}{#3}{#4}{\newFallbackClauseDefaultLength}}
\newcommand{\newSwitchClause}[4]{\newFallbackClause{#1}{#2}{#3}{#4}}

\newcommand{\newSimplexClauseFull}[6]{%
\begin{tikzpicture}[baseline=(coma.base)]
  \node[fit margins={left=1pt,right=0pt,top=1pt,bottom=1pt},outer sep=0] (coma) {\ensuremath{#1}};
  \node[fit margins={left=0pt,right=1pt,top=1pt,bottom=1pt},outer sep=0,anchor=base] (comb) at ($($(coma.east)!(coma.base)!(coma.south east)$)+(#6,0)$) {\ensuremath{#2}};
  \node[inner sep=0,outer sep=0] (comab) at ($(coma.east)!0.3!(coma.south east)$) {};
  \node[inner sep=0,outer sep=0] (combb) at ($(comb.west)!(comab)!(comb.south west)$) {};
  \node[inner sep=0,outer sep=0] (comat) at ($(coma.east)!0.3!(coma.north east)$) {};
  \node[inner sep=0,outer sep=0] (combt) at ($(comb.west)!(comat)!(comb.north west)$) {};
  \draw[->] (comab)
    to node[fit margins={left=0pt,right=0pt,top=0pt,bottom=1pt},outer sep=0,anchor=north,below] {\scriptsize{\ensuremath{#3}}}
       node[fit margins={left=0pt,right=0pt,top=0pt,bottom=1pt},outer sep=0,anchor=north,below,pos=1] {\scriptsize{\ensuremath{#4}}}
    (combb);
  \draw[->] (combt) to node[fit margins={left=0pt,right=0pt,top=1pt,bottom=0pt},outer sep=0,anchor=south,above] {\scriptsize{\ensuremath{#5}}} (comat);
\end{tikzpicture}%
}

\newcommand{\newSimplexClauseDefaultLength}{0.8}
\newcommand{\newSimplexClause}[5]{\newSimplexClauseFull{#1}{#2}{#3}{#4}{#5}{\newSimplexClauseDefaultLength}}

\title{\LARGE \bf
Formal Verification of Safety Architectures for Automated Driving
}


\author{\IEEEauthorblockN{
Clovis Eberhart\IEEEauthorrefmark{1}\IEEEauthorrefmark{3},
J\'er\'emy Dubut\IEEEauthorrefmark{2}\IEEEauthorrefmark{1}, 
James Haydon\IEEEauthorrefmark{1},
and
Ichiro Hasuo\IEEEauthorrefmark{1}\IEEEauthorrefmark{4}
\thanks{The authors are supported by JST ERATO HASUO Metamathematics for Systems Design Project (No. JPMJER1603) and JST START Grant (No. JPMJST2213).
IH is supported by JST CREST Grant (No. JPMJCR2012).}
}
\IEEEauthorblockA{\IEEEauthorrefmark{1}National Institute of Informatics, Tokyo, Japan}
\IEEEauthorblockA{\IEEEauthorrefmark{2}National Institute of Advanced Industrial Science and Technology (AIST), Tokyo, Japan}
\IEEEauthorblockA{\IEEEauthorrefmark{3}Japanese-French Laboratory for Informatics, IRL3527, Tokyo, Japan}
\IEEEauthorblockA{\IEEEauthorrefmark{4}The Graduate University for Advanced Studies (SOKENDAI), Hayama, Japan}}

\begin{document}

\maketitle
\thispagestyle{empty}
\pagestyle{empty}

\begin{abstract}
\emph{Safety architectures} play a crucial role in the
safety assurance of automated driving vehicles (ADVs).
They can be used as \emph{safety envelopes} of black-box ADV
controllers, and for \emph{graceful degradation} from one ODD to
another.
Building on our previous work on the formalization of
\emph{responsibility-sensitive safety (RSS)}, we introduce a novel
program logic that accommodates assume-guarantee reasoning and
fallback-like constructs.
This allows us to formally define and prove the safety of
existing and novel safety architectures.
We apply the logic to a pull over scenario and experimentally evaluate
the resulting safety architecture.
\end{abstract}

\section{Introduction}
\label{sec:intro}

Safety of automated driving vehicles (ADVs) is a problem of growing industrial and social interest. New technologies
\begin{auxproof}
 , 
 especially in sensing and perception (such as lidars and deep neural networks), 
\end{auxproof}
are making ADV technologically feasible; but for social acceptance, their safety should be guaranteed and explained.

\begin{auxproof}
 Many existing approaches to the safety assurance of ADVs are \emph{statistical}, such as accident statistics and testing (typically by computer simulation). Another family of approaches---namely \emph{logical} ones---are attracting growing attention, too.
 In logical approaches, safety is stated as a mathematical theorem and it is given a logical proof. Such approaches via proofs are called \emph{formal verification} and have been actively pursued for  software and computing hardware, with a number of notable successes. 

Principal advantages of formal verification are \emph{strong guarantees} (logical proofs never go wrong) and \emph{explainability} (a proof is a comprehensive record of step-by-step arguments towards a safety claim). Therefore, formal verification are widely seen as an important component in  society's effort towards ADV safety, complementing statistical approaches. 
\end{auxproof}

In this paper, we pursue \emph{formal verification} of ADV safety, that is, to provide its mathematical proofs. This \emph{logical} approach,  compared to \emph{statistical} approaches such as accident statistics and scenario-based testing, offers much stronger guarantees (controllers backed by logical proofs never go wrong).

Moreover, a mathematical proof serves as a detailed record of
\emph{safety arguments}, where 1) each reasoning step is
mathematically verified, and 2) each assumption is explicated.
Thanks to these features, other parties can easily scrutinize those
proofs as safety arguments, making them an important
\emph{communication medium} in society's efforts towards accountable
ADV safety.

\subsection{Safety Architecture}\label{subsec:introSafetyArch}

\begin{wrapfigure}[7]{r}{0pt}
\centering
\includegraphics[width=.24\textwidth]{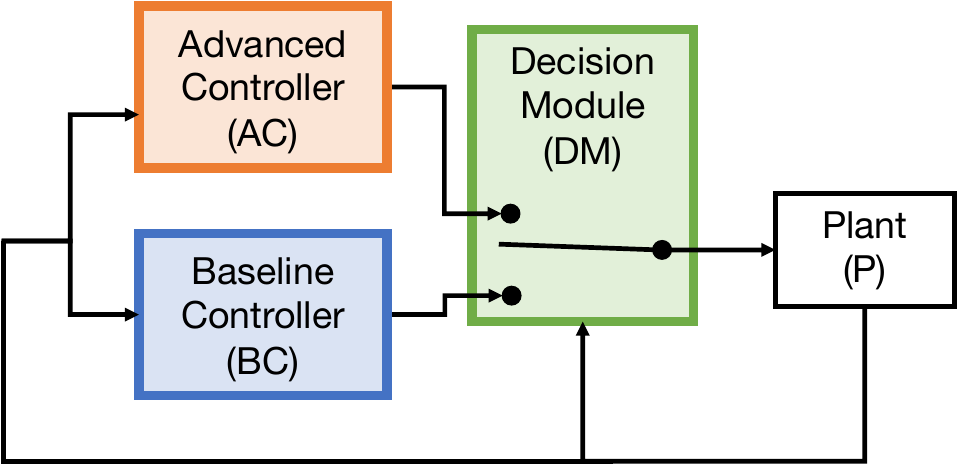}
\caption{simplex architecture}
\label{fig:introSimplex}
\end{wrapfigure}
Formal verification of real-world ADVs, however, is far from straightforward. This is because of the \emph{modeling problem}: for rigorous mathematical proofs,  one needs rigorous \emph{definitions} of all the concepts involved. Such definitions amount to mathematical \emph{modeling} of target systems, which is hard 
for ADVs due to  their complexity.


An effective countermeasure to the modeling problem---advocated e.g.\ in RSS, 
see \cref{subsec:introRSS}---is given by \emph{safety architectures}. An example, called the \emph{simplex architecture}~\cite{CrenshowGRSK07,SetoKSC98}, is shown in \cref{fig:introSimplex}. 
Here,  the \emph{advanced
  controller} (\AC{}) is a given controller (typically black-box); the
\emph{baseline controller} (\BC{}) is a simpler controller which emphasizes safety; and
the \emph{decision module} (\DM{}) switches between the two
controllers.  \DM{}  uses  \AC{} as often as possible. However, when  it finds  the current situation to be safety
critical, it switches to  safety-centric \BC{}.

The simplex architecture (\cref{fig:introSimplex}) exemplifies one application of safety architectures, namely as \emph{safety envelopes}.
Here, \BC{} and \DM{}  together form a safety envelope of \AC{}, taking over the control when needed. 
In particular, a safety proof of the whole system is possible even if \AC{} is a black box---the safety of \BC{} and the plant P, together with the ``contract'' imposed on \AC{} by \DM{}, is enough. This way, we can confine the modeling problem to a black-box \AC{} and conduct formal verification.

\begin{wrapfigure}[7]{r}{0pt}
\centering
\includegraphics[width=.25\textwidth]{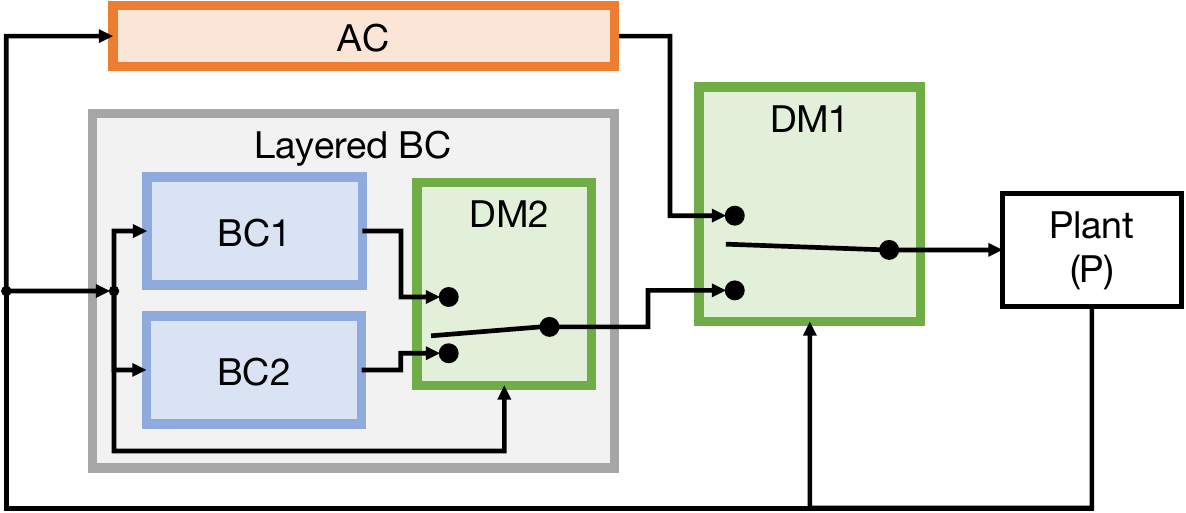}
\caption{layered simplexes}
\label{fig:introLayered}
\end{wrapfigure}
Another important application of safety architectures is  \emph{graceful degradation}, that is, a fallback mechanism to  limited yet guaranteed safety under  hostile environments. \cref{fig:introLayered} shows what we call the \emph{layered simplex architecture}. Here, \BC{}2 and \DM{}2 together form \BC{}1's safety envelope; the composite controller (the \emph{layered \BC{}}) forms a safety envelope of \AC{} with \DM{}1. 
 \BC{}1 and \DM{}1 come with stronger guarantees but require stronger assumptions;  \BC{}2 and \DM{}2, with weaker guarantees and assumptions, realize graceful degradation. Different assumptions imposed by the two can be thought of as different ODDs.


\begin{auxproof}
 The need of graceful degradation is omnipresent in automated driving, where environments come with ample uncertainties and thus assumptions can be violated. Different assumptions required by \BC{}1 and \BC{}2, respectively, can also be understood as different operational design domains (ODDs). 
\end{auxproof}

\subsection{Responsibility-Sensitive Safety (RSS) }\label{subsec:introRSS}

\emph{Responsibility-sensitive safety (RSS)} is a methodology,
proposed in~\cite{ShalevShwartzSS17RSS}, for the formal verification
of ADV safety.
It  circumvents the modeling problem by 1) thinking of each vehicle as
a black box and 2) imposing a contract, called \emph{RSS rules}, on
it.
The methodology---in particular, RSS rules as its central
construct---has many real-world applications, such as attribution of
liability, safety metrics, and regulations and standards.
See e.g.~\cite{Hasuo22_GARSS} for a detailed discussion.

An RSS rule $(P,\alpha)$ is a pair of an \emph{RSS condition} $P$ and a \emph{proper response} $\alpha$ (a specific control strategy). The RSS condition $P$  must ensure the safety of the execution of $\alpha$:
\begin{lemma}[conditional safety] \label{lem:condSafe}
  The execution of $\alpha$, starting in a state where $P$ is true, is collision-free. 
\end{lemma}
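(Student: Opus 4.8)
The plan is to unfold ``execution of $\alpha$ is collision-free'' into a continuous-time safety statement --- essentially the soundness property that makes $(P,\alpha)$ a genuine RSS rule --- and prove it by an invariant argument on the hybrid dynamics driven by $\alpha$. Since an RSS rule is given concretely, I would first fix, for the rule at hand, (a)~the admissible trajectories: the ego vehicle applies the accelerations prescribed by $\alpha$ (e.g.\ arbitrary acceleration $\le\amax$ during the reaction window $\rho$, then braking at rate $\ge\bmin$) while every other agent evolves arbitrarily inside the RSS kinematic envelope (speeds in $[0,\vmax]$, decelerations $\le\bmax$, no reversing); (b)~the collision predicate $\Safe$ --- in the one-dimensional car-following instance, the paradigmatic case, $\xfront>\xrear$; and (c)~the precondition $P$ at time $0$. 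The goal then reads: $\Safe$ holds at every time $t\ge 0$ on every admissible trajectory. I would discharge this by exhibiting an invariant $I$ (an assertion on the continuous state) with $P\limply I$, with $I$ preserved along all admissible flows, and with $I$ strong enough to rule out $\Safe$ being violated --- not merely at the final at-rest configuration, but at every instant.

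For the concrete $I$ I would compare \emph{worst-case stopping positions} rather than current positions. Write $S_{\mathrm f}=\xfront+\vfront^{2}/(2\bmax)$ for the furthest-forward point an adversarial lead can still reach (braking maximally, then resting), and $S_{\mathrm r}$ for the point at which the ego comes to rest under the worst $\alpha$-consistent continuation ($\xrear+\vrear\rho+\tfrac12\amax\rho^{2}+(\vrear+\amax\rho)^{2}/(2\bmin)$ inside the reaction window, $\xrear+\vrear^{2}/(2\bmin)$ afterwards). A short computation shows $S_{\mathrm f}$ is non-decreasing and $S_{\mathrm r}$ non-increasing along any admissible flow (using $\dot\vfront\ge-\bmax$, $\vfront\ge 0$ for the former; $\dot\vrear\le\amax$ in the reaction window and $\dot\vrear\le-\bmin$ afterwards for the latter), and that $P$ is exactly $S_{\mathrm f}>S_{\mathrm r}$ at time $0$; hence $I\equiv(S_{\mathrm f}>S_{\mathrm r})$ is invariant, and the tightness of the RSS bound appears as the fact that along the critical trajectory (lead at $\bmax$ until rest, ego at $\amax$ then $\bmin$) both quantities are exactly constant. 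To pass from $I$ to $\Safe$ at all times I would argue by contradiction from a \emph{first} collision instant $t_{0}$: there $\xfront(t_{0})=\xrear(t_{0})$ and $\vfront(t_{0})\le\vrear(t_{0})$ (the gap was positive and reached $0$), while $I$ together with $\xfront(t_{0})=\xrear(t_{0})$ forces $\vfront(t_{0})^{2}/\bmax>\vrear(t_{0})^{2}/\bmin\ge\vrear(t_{0})^{2}/\bmax$, i.e.\ $\vfront(t_{0})>\vrear(t_{0})$ --- a contradiction. The same template specializes to the other RSS rules used in the paper (such as the pull-over rule) with $\Safe$ and $I$ reinstantiated for the relevant safe region and its ``stopping-set'' over-approximation.

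The step I expect to be the main obstacle is exactly this last one: bridging the gap between the natural invariant $S_{\mathrm f}>S_{\mathrm r}$ --- which, when the lead is fast, does \emph{not} by itself imply $\xfront>\xrear$ --- and genuine collision-freedom at every instant, including the transient phase in which the ego is still within its reaction window and the raw gap $\xfront-\xrear$ may be shrinking. Making the first-collision argument airtight requires the standing RSS assumption $\bmin\le\bmax$ and a careful treatment of the reaction window, where $S_{\mathrm r}$ is the accel-then-brake stopping position rather than the pure-braking one and the velocity inequality at $t_{0}$ must still go through. Once $I$ and this argument are in place, the remaining obligations --- the monotonicity of $S_{\mathrm f}$ and $S_{\mathrm r}$ and the algebraic identity $P\Leftrightarrow S_{\mathrm f}(0)>S_{\mathrm r}(0)$ --- are routine calculus on affine systems.
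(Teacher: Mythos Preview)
The paper does not prove this lemma. It is stated in the introduction as the \emph{defining requirement} an RSS rule $(P,\alpha)$ must satisfy, after which the text only remarks that a proof ``is widely feasible thanks to the simplicity of $P$ and $\alpha$'' and, for the one-way traffic instance of Example~\ref{ex:RSSSafetyDistance}, refers the reader to~\cite{ShalevShwartzSS17RSS,Hasuo22RSSarXiv} (informal) and~\cite{Hasuo22_GARSS} (formal). There is therefore no in-paper argument to compare your plan against; the lemma functions here as a specification schema, not as a theorem with a proof.

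That said, your plan for the one-way traffic instance is exactly the standard argument carried out in those references: take as invariant the gap between worst-case stopping positions, check its monotonicity along the admissible hybrid flows, identify $P$ with the invariant at time $0$, and close with a first-contact contradiction using $\bmin\le\bmax$. Two minor remarks. First, your verbal description of $S_{\mathrm f}=\xfront+\vfront^{2}/(2\bmax)$ is inverted: this is the \emph{nearest} point at which the lead can come to rest (it brakes hardest), not the furthest-forward; your formula and the monotonicity computation are unaffected. Second, the reaction-window subtlety you flag is genuine but dissolves once you note that the accel-then-brake stopping position always dominates the pure-braking one, i.e.\ $S_{\mathrm r}(t)\ge \xrear(t)+\vrear(t)^{2}/(2\bmin)$ for all $t$; with this, the first-collision contradiction ($\vfront(t_0)>\vrear(t_0)$ versus $\vfront(t_0)\le\vrear(t_0)$) goes through uniformly, inside or outside the window.
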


 A mathematical proof of this lemma is widely feasible thanks to the simplicity of $P$ and $\alpha$: they do not mention the internal working of ADVs (see below). This is how RSS enables formal verification of ADVs.

\begin{example}[one-way traffic~\cite{ShalevShwartzSS17RSS}]\label{ex:RSSSafetyDistance}
Consider~\cref{fig:onewayTraffic}, where the subject vehicle
  (\SV{}, $\mathrm{car}_\mathrm{rear}$) drives behind another car
  (\POV{}, $\mathrm{car}_\mathrm{front}$). 
The RSS condition 
is
  \begin{math}
    P \;=\;\bigl(x_{f} - x_{r} > \dRSS(v_{f}, v_{r})\bigr)
  \end{math},
  where $\dRSS(v_{f}, v_{r})$ is the \emph{RSS safety distance} 
  \begin{equation}\label{eq:RSSMinDist}\small
    \begin{aligned}
    &  \max\Bigl(\,0,\,
          v_{r}\rho + \frac{1}{2}\amax  \rho^2 + \frac{(v_{r} + \amax  \rho)^2}{2\bmin} -\frac{v_{f}^2}{2\bmax}\,\Bigr).
    \end{aligned}
  \end{equation}
  Here $x_{f}, x_{r}$ are positions of the cars,
  $v_{f}, v_{r}$ are velocities,
$\rho$
  is the  \emph{response time} for $\SV{}$, 
$a_{\max}$ is the maximum
acceleration rate,
  $\bmin$ is the maximum comfortable braking rate, and $\bmax$ is the maximum emergency
  braking rate.

 The proper response $\alpha$  dictates  \SV{}
        to engage the maximum comfortable braking (at rate $\bmin$)
        when condition
$P$
is about to be violated.
 Proving the conditional safety lemma for $(P,\alpha)$ is not hard. See~\cite{ShalevShwartzSS17RSS,Hasuo22RSSarXiv}  (informal) and~\cite{Hasuo22_GARSS} (formal).
\end{example}

  \begin{wrapfigure}[4]{r}{0pt}
 \centering
 \includegraphics[bb=66 217 305 271,clip,width=9em]{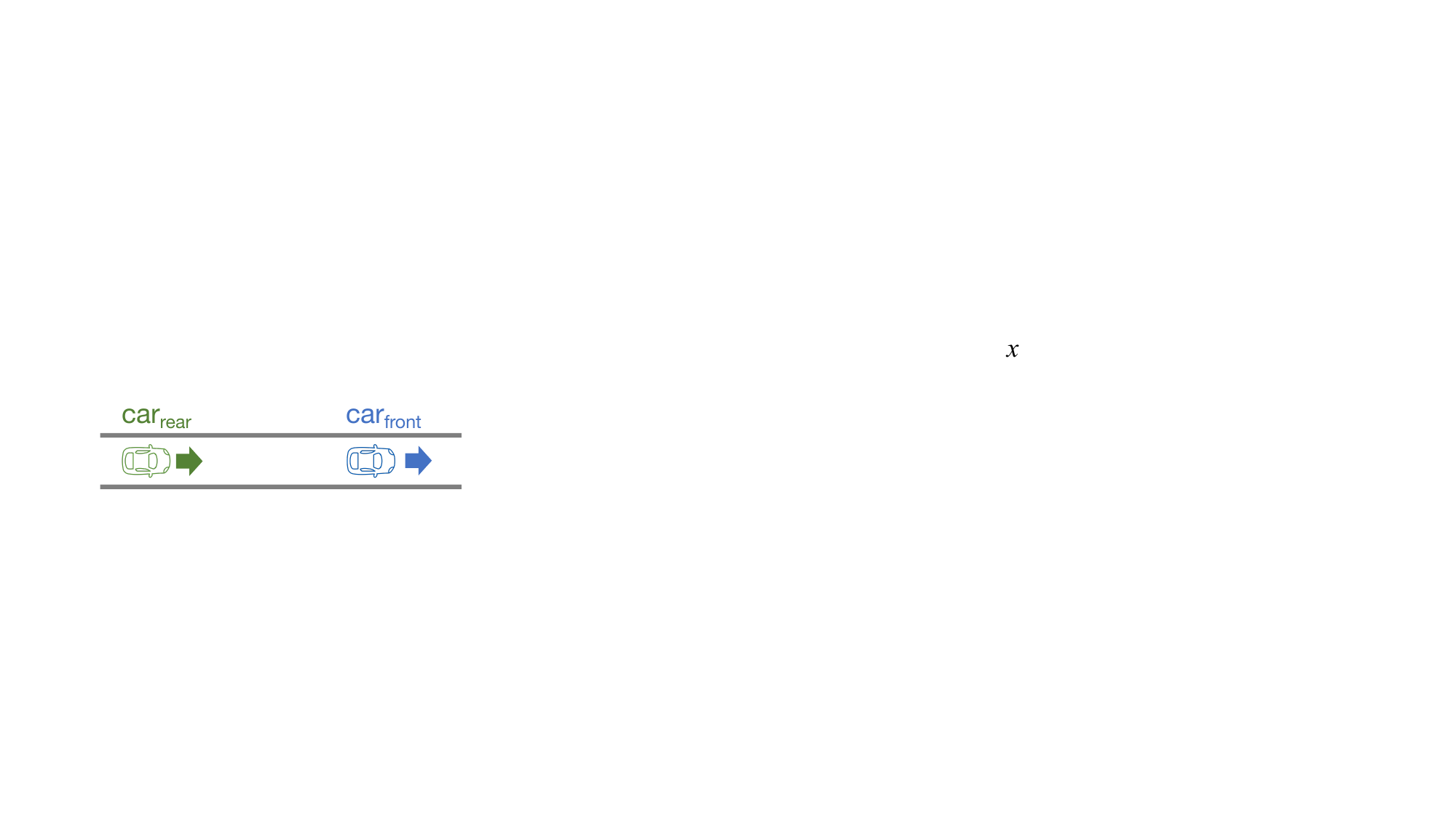}
 \caption{one-way traffic}
 \label{fig:onewayTraffic}
\end{wrapfigure}
\noindent
The current work builds on one important application of RSS rules, namely their use in the simplex architecture (\cref{fig:introSimplex}). 
The conceptual structure of RSS rules maps naturally to the simplex architecture: \AC{} is an ADV; \BC{} executes a proper response $\alpha$; and  \DM{} switches to \BC{} if the RSS condition $P$ is violated, switching back to \AC{} when $P$ is robustly satisfied.

\subsection{Logical Formalization of RSS by the Program Logic $\dHL$}
 An RSS rule must be derived
for each individual driving scenario.  Broad application of RSS requires many such derivations; 
doing so informally (in a pen-and-paper manner) is not desirable for scalability, maintainability, and accountability. 

\begin{wrapfigure}[11]{r}{0pt}
\includegraphics[bb=0 0 337 458,clip,scale=.24]{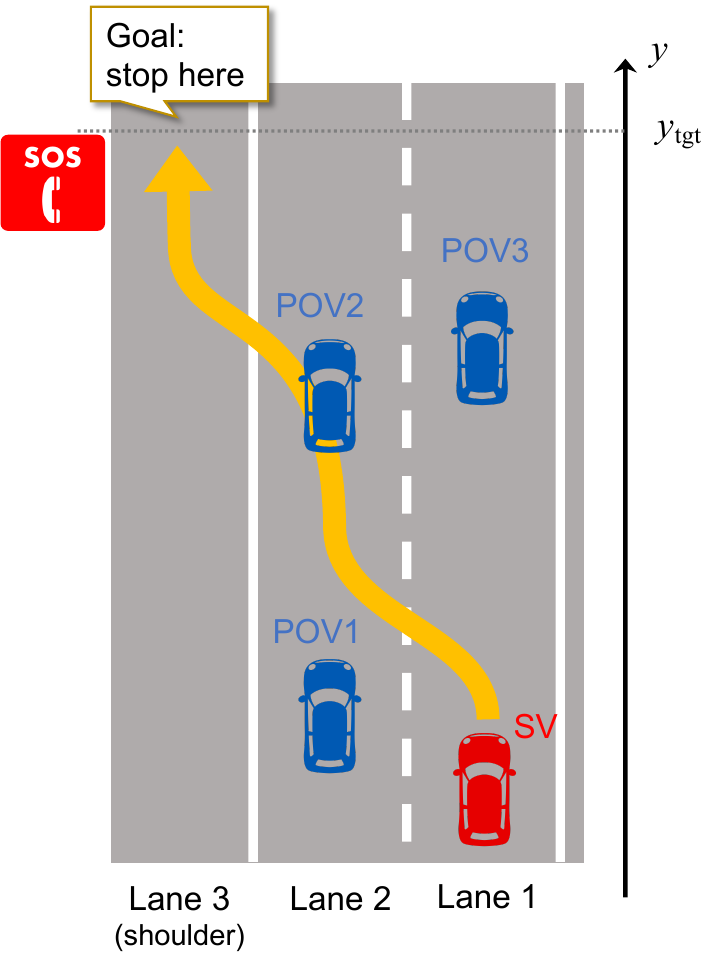}
 \caption{pull over}
 \label{fig:pulloverIntro}
\end{wrapfigure}
This is why we pursued the formalization of RSS in~\cite{Hasuo22_GARSS}. We introduced a logic $\dHL$---a symbolic framework to write proofs in---extending classic \emph{Floyd--Hoare logic}~\cite{Hoare69} with differential equations. The logic $\dHL$ derives \emph{Hoare quadruples} $[P]\,\alpha\,[Q]\!:\!S$;\footnote{In~\cite{Hasuo22_GARSS} we used  delimiters $\{P\}\,\alpha\,\{Q\}:S$ for Hoare quadruples. We use $[\_\,]$ in this paper to emphasize their \emph{total correctness} semantics (\cref{def:hoare:quintuples}).} it means that the execution of a \emph{hybrid program} $\alpha$, started when a \emph{precondition} $P$ is true, terminates, makes a \emph{postcondition} $Q$ true at the end of the execution, and a \emph{safety condition} $S$ true throughout the execution. 

Note that \cref{lem:condSafe} of RSS  naturally corresponds to the validity of  a Hoare quadruple: if we let $P$ be an RSS condition and $\alpha$ be a proper response, then $S$ (expressing collision-freedom) is ensured throughout. Moreover, we can use the postcondition $Q$ to express the \emph{goal} of $\alpha$, such as to stop at a desired position. This extension of RSS, where RSS rules can guarantee not only safety but also goal achievement, is called \emph{GA-RSS}  in~\cite{Hasuo22_GARSS}. 
To distinguish from $\GARSS$, we denote classical RSS by $\CARSS$
for \emph{collision-avoiding RSS}.

Another major benefit of  $\dHL$ is \emph{compositional} reasoning.
\begin{auxproof}
 Exploiting the compositionality of program logic in general (which is also the case of $\dHL{}$),
\end{auxproof}
We devised in~\cite{Hasuo22_GARSS} a workflow in which a complex scenario is split into simpler subscenarios, and RSS rules are derived in a divide-and-conquer manner.

As a case study, in~\cite{Hasuo22_GARSS}, we derived an RSS rule for the \emph{pull over} scenario, shown in \cref{fig:pulloverIntro}. Thus we extended the application domain of RSS to such complex scenarios.


\subsection{This Work: Formal Verification of Safety Architectures}
In this paper, we extend 
$\dHL{}$~\cite{Hasuo22_GARSS} and introduce the logic
$\dHLFB$ called \emph{differential Floyd--Hoare logic with interruptions},
for the purpose of proving that safety architectures are indeed safe.
Using $\dHLFB$, we  address the following questions.
  (On safety envelopes) Let  $(P,\alpha)$ be an RSS rule that is safe (\cref{lem:condSafe}). Can we prove that the simplex architecture, using $P,\alpha$ as \DM{} and \BC{}, is indeed safe?
  (On graceful degradation) Let  $(P_{1},\alpha_{1})$ and $
    (P_{2},\alpha_{2})$ be RSS rules. How exactly should we use them to form the layered simplex architecture (\cref{fig:introLayered})? 
    What safety guarantee is provided under what assumption? 
    Can we give mathematical proofs for such guarantees that are \emph{compositional}, that is,  can they be easily obtained by combining proofs of \cref{lem:condSafe} for the two RSS rules?

The new logic $\dHLFB$ has the following major departures from $\dHL$. Firstly, $\dHLFB$ derives \emph{Hoare quintuples}
\begin{equation}\label{eq:HoareQuintuple}
  \hquin{A}{P}{\coma}{Q}{G}.
\end{equation}
The components $A, G$ are called an \emph{assumption} and a \emph{guarantee}, respectively, and accommodate assume-guarantee type reasoning typical in safety architectures. Comparing to Hoare quadruples $[P]\,\alpha\,[Q]:S$ in~\cite{Hasuo22_GARSS},  the safety condition $S$ (that must hold throughout $\alpha$'s execution) is split into \emph{an assumption throughout} ($A$) and \emph{a guarantee throughout} ($G$).

Secondly, as part of \emph{hybrid programs} that we use to model
driving situations in $\dHLFB$, we introduce the construct
$\asLongAsClause{\alpha}{A}$ (``$\alpha$ \emph{as long as} $A$'');
this executes the program $\alpha$  while the condition $A$ is true,
and halts otherwise.
This construct, introduced as a suitable syntactic sugar
(\cref{def:asLongAs}), turns out to be expressive enough for the
safety architectures of our interest.
In particular, the following  constructs can be expressed:
$\newFallbackClause{\coma}{\comb}{A}{B}$ (the \emph{fallback} of
$\alpha$ on $\beta$, \cref{def:fallback}) and
$\newSimplexClause{\coma}{\comb}{A}{B}{C}$ (the \emph{simplex} of
$\coma$ and $\comb$ with switching by $A,B,C$,
\cref{rem:simplexByAsLongAs}).

Thirdly, we develop a novel semantical foundation of Hoare quintuples
which, unlike in~\cite{Hasuo22_GARSS}, requires explicit
modeling of continuous dynamics (needed for accommodating assumptions
$A$).
This allows us to formulate \emph{derivation rules} for Hoare
quintuples regarding $\asLongAsClause{\alpha}{A}$,
$\newFallbackClause{\coma}{\comb}{\asserta}{\assertb}$, and
$\newSimplexClause{\coma}{\comb}{C}{D}{C'}$.
These rules come in \emph{strong} and \emph{weak} versions: the strong
is used when the original assumption holds throughout and thus no
fallback is needed (i.e.\ under \emph{stronger} assumptions); the weak
one addresses the other cases (i.e.\ under \emph{weaker} assumptions).

Our main case study is about  a safety envelope with graceful degradation for the pull over scenario (\cref{fig:pulloverIntro}). It uses two RSS rules. The RSS rule we derived in~\cite{Hasuo22_GARSS}---called the (goal-aware) \emph{GA-RSS} rule---guarantees safety and goal achievement (i.e.\  reaching the stopping position), but it comes under the constant-speed assumption on  principal other vehicles (\POV{}s). In case \POV{}s change their speed, we use the RSS rule from~\cite{ShalevShwartzSS17RSS} (the (collision-avoiding) \emph{CA-RSS rule}, \cref{ex:RSSSafetyDistance}), giving up the  goal-achievement guarantee. We present the layered simplex architecture (\cref{fig:introLayered}) that combines the two rules, and prove its \emph{strong} and \emph{weak} guarantees.
The proof is compositional, using the guarantees of the two rules. We present the implementation of the layered simplexes; we show that it  ensures safety, and achieves the goal when possible.

\subsection{Contributions}
We provide a theoretical framework for proving that safety
architectures are indeed safe, emphasizing their application to safety envelopes and
graceful degradation.
Technically:
 1) We extend  $\dHL$~\cite{Hasuo22_GARSS} for RSS~\cite{ShalevShwartzSS17RSS} and introduce a logic $\dHLFB$.  It accommodates assume-guarantee  reasoning by \emph{Hoare quintuples} $A:[P]\,\alpha\,[Q]:G$. 
 2) We introduce the program construct $\asLongAsClause{\alpha}{A}$
(``$\alpha$ \emph{as long as} $A$''), from which fallbacks
$\newSwitchClause{\coma}{\comb}{\asserta}{\assertb}$ and simplexes
$\newSimplexClause{\coma}{\comb}{C}{D}{C'}$ can be expressed.
 3) We introduce derivation rules for those constructs. We prove them sound with respect to their rigorous semantics, which is also our contribution.
 4) As a case study, we present the layered simplex architecture (\cref{fig:introLayered}) for the pull over scenario (\cref{fig:pulloverIntro}), for which we prove ``strong'' and ``weak'' guarantees. 
 5) We implemented, and experimentally evaluated, the layered simplexes.
%


\subsection{Related and Future Work}\label{subsec:relatedWork}

\begin{auxproof}
 Some RSS rules have been implemented and are offered as a library~\cite{GassmannOBLYEAA19IV}. Integration of the goal-aware RSS rules we derive in this paper, in the library, is future work. One advantage of doing so is that the GA-RSS rules will then accommodate varying road shapes.

 Need to cite~\cite{KarimiD22}
 \begin{itemize}
 \item about test-case generation
 \item novelty is an iterative generation of increasingly challenging test cases
 \item demonstrated that auto-pilot-plus-RSS can exhibit unsafe behaviors. This is because, the RSS rules implemented in a safety architecture is essentially for the one-way traffic scenario (\cref{ex:RSSSafetyDistance}) and does not fully take  the studied intersection scenario into account
 \end{itemize}
\end{auxproof}

Using RSS in safety architectures is advocated in the literature, e.g.~\cite{OborilS21}. Ours is the first to formally prove their safety.

There are some extensions of RSS. They study 1) parameter selection for balancing safety and progress~\cite{KonigshofOSS22}, 2) an empirical safeguard layer in addition to RSS~\cite{OborilS21},
\begin{auxproof}
  invoked in case parameter values are too liberal~\cite{OborilS21} (this layer is empirical and does not come with logical safety proofs), 
\end{auxproof}
3) extension to unstructured traffic~\cite{PaschOGS21},
 and 4)    swerves as evasive maneuvers~\cite{deIacoSC20IV}.
Among these, the additional layer approach in~\cite{OborilS21} is the
closest to the current work: it studies the question ``what if the
original assumption is violated,'' as we do for graceful degradation.
A big difference, however, is that our layered simplexes are formally
verified while the empirical extra layer in~\cite{OborilS21} is not.

\begin{auxproof}
 Need to cite~\cite{KonigshofOSS22}
 \begin{itemize}
 \item studies parameter values for RSS rules, such as the maximum acceleration rate, in order to balance safety and progress
 \item focus on lateral safety (unlike the longitudinal safety e.g.\ in \cref{ex:RSSSafetyDistance})
 \item The presented technique is empirical
 \end{itemize}

 \cite{OborilS21}
 \begin{itemize}
 \item $\text{RSS}^{+}$: a proactive extension of RSS
 \item takes a proactive measure in case the situation goes beyond the assumptions for the RSS rules---typically because of too strong assumptions on parameter values 
 \item Its essence is an additional (statistical) layer on top of the RSS-based safety architecture. The proper response suggested by this additional layer, however, does not come with a mathematical safety guarantee, unlike the proper response of the original RSS
 \item (Our hierarchical RSS work is the same in that it adds another layer, but a big difference is that our additional layer is logical again, and the construction is compositional)
 \end{itemize}

 \cite{PaschOGS21}
 \begin{itemize}
 \item Expands the application domain of RSS from structured driving scenarios to unstructured ones with vulnerable road users (VRU)
 \item The main technical focus is the choice of kinematic models for different class of VRUs, and their parameter selection
 \item This can be integrated with the current work
 \end{itemize}

 Recent extensions of RSS include a risk-aware one~\cite{OborilS20IV}  and one that allows swerves as evasive maneuvers~\cite{deIacoSC20IV}. These extensions shall be pursued in our current goal-aware framework. In particular, allowing swerves should be possible, and it will significantly improve the progress of a RSS-supervised controller.
\end{auxproof}


\begin{auxproof}
 Regarding logical approaches to ADV safety, some recent work formalize traffic rules in temporal logic, so that they can be effectively monitored and enforced~\cite{DBLP:conf/ivs/MaierhoferMA22,DBLP:conf/ivs/LinA22}. Major differences from the current work are 1) their rules may not be formally verified while we insist ours are, and 2) we directly implement RSS rules in safety architectures, while the implementation is largely left open in~\cite{DBLP:conf/ivs/MaierhoferMA22,DBLP:conf/ivs/LinA22}. 

\end{auxproof}

\begin{auxproof}
 
 Need to cite~\cite{DBLP:conf/ivs/MaierhoferMA22}, ``Formalization of Intersection Traffic Rules in Temporal Logic''. 
 \begin{itemize}
 \item Formalizing existing traffic rules in temporal logic. It is not about proving the safety of some control strategy.
 \end{itemize}

 Need to cite~\cite{DBLP:conf/ivs/LinA22}, ``Rule-Compliant Trajectory Repairing using Satisfiability Modulo Theories.'' Sounds like our use of RSS rules in the safety architecture. Say how ours differs from theirs. As I understand, their ``rule'' is a traffic rule such as ``give way to pedestrians.''
 \begin{itemize}
 \item Rules are given
 \item They discuss how to repair trajectories
 \item In contrast, ours is about designing rules and proving their safety
 \end{itemize}
\end{auxproof}


\begin{auxproof}
A rule-based decision making system for intersection scenarios is proposed in~\cite{AksjonovK21}, but it does not come with safety proofs. From the RSS point of view, the work~\cite{AksjonovK21}  is suggesting proper responses, for which safety-guaranteeing RSS conditions can be derived using our framework. Doing so is future work.
 \cite{AksjonovK21}
 \begin{itemize}
 \item Proposes a rule-based decision making system for intersection scenarios
 \item No logical formalization, no safety proofs
 \item In relation to the current work, it can be seen as a suggestion of a class of proper responses
 \end{itemize}
\end{auxproof}


Formal verification of ADV safety is pursued in e.g.~\cite{RizaldiISA18,RoohiKWSL18arxiv}. These works adopt much more fine-grained modeling compared to the current work and~\cite{Hasuo22_GARSS}; a price for doing so seems to be scalability and flexibility.

We can easily speculate that the application domain of the current work does not restrict to ADVs. In fact, for our previous work~\cite{Hasuo22_GARSS} already, there have been interests from robotics and the drone industry. We will pursue such broader applications.

\begin{auxproof}
 \begin{itemize}
 \item 
 many optimization- and learning-based planning algorithms for safe driving, such as~\cite{mcnaughton2011motion} (they do not offer rigorous safety guarantee),
 \item 
 testing-based approaches for ADS safety, such as~\cite{LuoZAJZIWX21ASE} (they do not offer rigorous safety guarantee, either), and
 \item 
 runtime verification approaches for ADS safety by reachability analysis, such as~\cite{LiuPA20IV,PekA18IROS}.
 \end{itemize}

 The problem of formally verifying correctness of RSS rules is formulated and investigated in~\cite{RoohiKWSL18arxiv}. Their formulation is based on a rigorous notion of signal; they argue that none of the existing \emph{automated} verification tools is suited for the verification problem. This concurs with our experience so far---in particular, formal treatment of other participants' responsibilities (in the RSS sense) seems to require human intervention. At the same time, in our preliminary manual verification experience in \KeYmaeraX, we see a lot of automation opportunities. Developing proof tactics dedicated to those will ease manual verification efforts.

 Formal (logical, deductive) verification of ADS safety is also pursued in~\cite{RizaldiISA18} using the interactive theorem prover Isabelle/HOL~\cite{NipkowPW02}. The work uses a white-box model of a controller, and a controller must be very simple. This is unlike RSS and the current work, which allows black-box \AC{}s and thus accommodates various real-world controllers such as sampling-based path planners (\cref{subsec:introRSSSafetyArchitecture}).

 In the presence of perceptual uncertainties (such as  errors in position measurement and object recognition), it becomes harder for  \BC{}s and \DM{}s to ensure safety. Making  \BC{}s tolerant of perceptual uncertainties is pursued in~\cite{SalayCEASW20PURSS,DBLP:conf/nfm/KobayashiSHCIK21}. One way to adapt \DM{}s is to enrich their input so that they can better detect  potential hazards. Feeding DNNs' confidence scores is proposed in~\cite{AngusCS19arxiv};  in~\cite{ChowRWGJALKC20}, it is proposed for \DM{}s to look at inconsistencies between perceptual data of different modes.


 Need to cite~\cite{BannourNC21}
 \begin{itemize}
 \item No need to cite
 \item ``Logical'' in the title refers to a level of scenarios (functional---logical---concrete)
 \item their method relies on symbolic automata and constraint solving
 \end{itemize}

 \cite{LauerS22}
 \begin{itemize}
 \item No need to cite
 \item It says ``verification'' but is about test scenario generation
 \end{itemize}
\end{auxproof}

\section{Hybrid Programs}
\label{sec:prog}

In this section, we define the syntax of $\dHLFB$ (\emph{differential
Floyd-Hoare logic with interruptions}), which is similar to
that of $\dHL$~\cite{Hasuo22_GARSS}.
We then define its semantics in terms of valid traces, which
correspond to program executions.

\subsection{Syntax}
\label{sec:prog:syntax}

Definitions
up to
\cref{def:program} mostly come from~\cite{Hasuo22_GARSS},
but they are refined to address environmental nondeterminism.
We highlight the differences from~\cite{Hasuo22_GARSS} when
introducing the notions.

\begin{definition}[terms, assertions]
  \label{def:assertion}
  A \emph{term} is a rational polynomial on a fixed infinite set
  $\Variables$ of variables.
  \emph{Assertions} are generated by the grammar
\begin{math}
   A,B\; ::=\; \true \mid\false\mid e \sim f \mid A \land B \mid A \lor B \mid \lnot A \mid A \limply B
\end{math},
where $e$, $f$ are terms and
  $\sim \ \in \set{=, \leq, <, \neq}$.
\end{definition}

An assertion can be \emph{open} or \emph{closed} (or both, or none).
The exact definition can be found
in~\cite{Hasuo22_GARSS} and the only thing relevant here is
that open assertions describe open subsets of $\R^\Variables$.

 We stick to polynomial terms for simplicity of presentation,
but it is possible to extend the syntax of terms
(see~\cite{Hasuo22_GARSS}).

\begin{definition}[programs]\label{def:program}
  We assume that the set of variables $\Variables = \Variables_\Logic
  \sqcup \Variables_\Physical \sqcup \Variables_\Environment$ is the
  disjoint union of
  $\Variables_\Logic$ (\emph{cyber variables}),
  $\Variables_\Physical$ (\emph{physical variables}), 
  and 
  $\Variables_\Environment$  (\emph{environment variables}).

  \emph{Hybrid program} (or \emph{programs}) are generated by the
  grammar
  \begin{align*}
    \coma, \comb &::=
      \skipClause{} \mid
      \assignClause{\var}{\term} \mid
      \coma;\comb \mid
      \ifThenElse{\asserta}{\coma}{\comb} \mid \\
    & \phantom{\,::=\ } \dwhileClause{\asserta}{\odeClause{\vars}{\funs}}
      \mid \whileClause{\asserta}{\coma}
  \end{align*}
  In $\assignClause{\var}{\term}$, $\var$ is a cyber variable.
  In $\dwhileClause{\asserta}{\odeClause{\vars}{\funs}}$, $\vars$ and
  $\funs$ are lists of the same length, respectively of (distinct)
  physical variables and terms, and we require that $\asserta$ is open. We abbreviate $\ifThenElse{A}{\coma}{\skipClause{}}$ as
$\ifThen{A}{\coma}$.

\end{definition}

The decomposition of $\Variables$ into three sets differs
from~\cite{Hasuo22_GARSS} and allows a better
characterization of the nondeterminism that can happen in a program:
physical variables can only change continuously, while cyber variables
can also change discontinuously, and environment variables can change
even more drastically (see $\dwhileKeyword$ in
\cref{sec:prog:semantics} for more details).

The following program construct is central to our formal analysis
of safety architectures.
The program $\asLongAsClause{\alpha}{A}$ starts the execution of
$\alpha$ and continues \emph{as long as} the assertion $A$ is true.
The execution is interrupted (even if it is not yet completed) once
$A$ is violated.
Note that this is different from $\whileClause{A}{\alpha}$ where
$\alpha$ is repeatedly executed and the check of $A$ is done between
these executions.

\begin{definition}[$\asLongAsClause{\coma}{A}$, ``$\alpha$ as long as $A$'']\label{def:asLongAs}
  Let $A$ be an open assertion. For each program $\alpha$, we  recursively define the program 
  $\asLongAsClause{\coma}{A}$ as the following syntactic sugar.
\begin{displaymath}\small
   \begin{aligned}
    \asLongAsClause{\skipClause{}}{A} \;\equiv\; \skipClause{} ~~~&~~~
    \asLongAsClause{(\assignClause{\var}{\term})}{A} \;\equiv\;
      \ifThen{A}{\assignClause{\var}{\term}} \\
    \asLongAsClause{(\coma;\comb)}{A} &\;\equiv\;
      \ifThen{A}{\asLongAsClause{\coma}{A};\;
      \ifThen{A}{\asLongAsClause{\comb}{A}}} \\
    \asLongAsClause{\ifThenElse{C}{\coma}{\comb}}{A} &\;\equiv\;
      \ifThenElse{C}{\asLongAsClause{\coma}{A}}{\asLongAsClause{\comb}{A}} \\
    \asLongAsClause{\dwhileClause{C}{\odeClause{\vars}{\funs}}}{A} &\;\equiv\;
      \dwhileClause{A \land C}{\odeClause{\vars}{\funs}} \\
    \asLongAsClause{\whileClause{C}{\coma}}{A} &\;\equiv\;
      \whileClause{A \land C}{\asLongAsClause{\coma}{A}} 
  \end{aligned}
\end{displaymath}
\end{definition}

The following program is a basic brick we use to define some safety
architectures.
It executes $\coma$ as long as $C$ is true.
If $C$ holds all the time, then $\coma$ is executed to completion, and
the program stops if $D$ holds.
If $D$ does not hold at the end of execution, or $C$ is violated and
the execution of $\coma$ is interrupted, and $\comb$ starts executing
as a fallback.

\begin{definition}[$\newSwitchClause{\coma}{\comb}{\asserta}{\assertb}$, fallback]
\label{def:fallback}
  Let $A$ be an open assertion. The program $\newSwitchClause{\coma}{\comb}{\asserta}{\assertb}$, called the \emph{fallback} of $\alpha$ on $\beta$ if not $A$,  is introduced as the following syntactic sugar:
\begin{displaymath}
 \newSwitchClause{\coma}{\comb}{\asserta}{\assertb}
 \;\equiv\;
 \seqClause{(\asLongAsClause{\coma}{\asserta})}{\;\ifThen{\neg
  (\asserta \land \assertb)}{\comb}}.
\end{displaymath}
We write $\newFallbackClause{\coma}{\comb}{\asserta}{}$ for
$\newFallbackClause{\coma}{\comb}{\asserta}{\assertb}$ when $\assertb
\equiv \top$.


\end{definition}

\subsection{Semantics}
\label{sec:prog:semantics}

In this section, we define a semantics for $\dHLFB$.
Contrary to~\cite{Hasuo22_GARSS}, which describes a
small-step reduction semantics, here we describe a program's semantics
in terms of its traces, in the style of LTL~\cite{pnueli1977temporal},
which is needed for this new semantics.


\vspace*{-2pt}
\begin{definition}[store]\label{def:store}
  A \emph{store} is a function $\rho\colon V\to \R$ from variables to reals.
  \emph{Store update} $\update{\store}{\var}{\val}$;
  it maps $\var$ to $\val$ and any other variable $\var'$ to
  $\store(\var')$.
  The \emph{value} $\sem{\term}{\store}$ of a term $\term$ in a store
  $\store$ is a real defined as usual by induction on $e$ (see for
  example~\cite[Section~2.2]{Winskel93}).
  The \emph{satisfaction} relation  $\store \vDash A$ between stores
and
  $\dHL$ assertions
is also defined as usual
  (see~\cite[Section~2.3]{Winskel93}).
  We write $\store \eqv \store'$ when $\forall x \in \Variables_\Logic
  \sqcup \Variables_\Physical$, $\store(x) = \store'(x)$.
  %
\end{definition}

\begin{definition}[trace]\label{def:trace}
  A \emph{trace} is a (finite or infinite) sequence
 $\sigma = \bigl(\,(t_{0}, h_{0}), (t_{1}, h_{1}), \dotsc\,\bigr)
$ of pairs, where
 $t_i \in \R_{\geq 0}$ and $h_i
  \colon [0,t_i] \to \R^{\Variables}$ is a continuous function.
  If $\sigma$ above is a sequence of length $n \in \Nbar = \N \cup \set{+\infty}$, we write
  $\dom(\sigma) \equiv \setcomp{(i,t)}{i < n+1, t \leq t_i}$,
  $\sigma(i) = h_i$ and $\sigma(i,t) = h_i(t)$ for $(i,t) \in
  \dom(\sigma)$.
  Given $\sigma$ of finite length $n$, we
  define the \emph{ending state} $\traceend(\sigma) \in
  \R^{\Variables}$ as $\sigma(n,t_n)$.
  Given an assertion $C$, we
  define $\sigma \vDash C$ as for all $(i,t) \in \dom(\sigma)$,
  $\sigma(i,t) \vDash C$.
  We denote by $\tracezero{\store}$ the trace $\bigl((0, f_\store)
  \bigr)$, where $f_\store(0) = \store$.
  Given $\sigma$ as above and $(i,t) \in \dom(\sigma)$, 
  we define
  $\restr{\sigma}{i,t} = \bigl( (t_0,h_0), \ldots, (t_{i-1},h_{i-1}),
  (t,\restr{h_i}{[0,t]}) \bigr)$.
  The concatenation of a finite trace $\sigma$ and a trace $\sigma'$
  is 
 $\concatbin{\sigma}{\sigma'}$.
  Similarly, $\concatfin{\sigma}{i}{n}$ is the concatenation of
  traces $\sigma_0$, $\ldots$, $\sigma_n$, where all are finite
  (except maybe $\sigma_n$), and $\concatinf{\sigma}{i}$ the
  concatenation of finite $\sigma_0$, $\sigma_1$, $\ldots$
\end{definition}



The following definition of valid traces of a program departs from the
 standard definition~\cite{Hasuo22_GARSS,Platzer18}.
We need this change to accommodate environmental assumptions; see
\cref{sec:hoare}.
The main difference is that our traces record all the continuous
dynamics, instead of recording a discrete set of stores that occur
within.

The basic intuition is that the traces of a program $\coma$ can be
thought of as traces of $\coma$ in the traditional sense, but where
environment variables can be changed at any time during
$\dwhileKeyword$s by an unspecified program (the environment).

\begin{definition}[trace semantics]
  \label{def:semantics}
  We say a trace  $\sigma$ is \emph{valid} for a program $\coma$ from store $\store$, denoted
  $\store, \sigma \vDash \coma$, if the following holds (by induction on $\coma$):
  \begin{itemize}
    \item $\store, \tracezero{\store} \vDash \skipClause{}$ and 
    $\store, \tracezero{\update{\store}{\var}{\sem{\term}{\store}}} \vDash
      \assignClause{\var}{\term}$.
    \item $\store, \sigma \vDash \coma;\comb$ iff either $\sigma$ is
      infinite and $\store,\sigma \vDash \coma$; or $\sigma = \sigma_1
      \cdot \sigma_2$ where $\store, \sigma_1 \vDash \coma$ and
      $\traceend(\sigma_1), \sigma_2 \vDash \comb$.
    \item $\store, \sigma \vDash \ifThenElse{C}{\coma}{\comb}$ iff
      either $\store \vDash C$ and $\store, \sigma \vDash \coma$; or
      $\store \nvDash C$ and $\store, \sigma \vDash \comb$.
    \item $\store, \sigma \vDash
      \dwhileClause{C}{\odeClause{\vars}{\funs}}$ iff either $\store
      \nvDash C$ and $\sigma = \tracezero{\store}$; or $\store \vDash
      C$ and there exists $n \in \Nbar$ such that $\sigma =
      \bigl((t_i,h_i)\bigr)_{i<n}$, and
        1) for all $i < n$, 
          $\restr{h_i}{\vars}$ is differentiable, its derivative
          is $\restr{h_i'}{\vars}(t) = \sem{\funs}{h_i(t)}$ for
          all $0 \leq t \leq t_i$, $h_i(x)$ is constant for other
          $x \in \Variables_\Logic \cup \Variables_\Physical$,and $h_i(0) \eqv h_{i-1}(t_{i-1})$,
        2) for all $(i,t) \in \dom{(\sigma)}$, if $(i,t) <
          (n-1,t_{n-1})$, then $h_{i}(t) \vDash C$ (if $n \neq +\infty$),
        and 3) $h_{n-1}(t_{n-1}) \nvDash C$ (if $n \neq +\infty$),
      where $h_{-1}(t_{-1}) = \rho$, and $+\infty - 1 = +\infty$.
    \item $\store, \sigma \vDash \whileClause{C}{\coma}$ iff either
      $\store \nvDash C$ and $\sigma = \tracezero{\store}$; or $\store
      \vDash C$ and there exists $n \in \Nbar$ such that 1) $\sigma =
      \concatfin{\sigma}{i}{n}$, 2)
      for all $i < n$, $h_{i-1}(t_{i-1}), \sigma_i \vDash \coma$
          and $h_i(t_i) \vDash C$, 
        and  3) for $n \neq +\infty$, $h_n(t_n) \nvDash C$.
  \end{itemize}
\end{definition}

\begin{definition}[incomplete traces]
  An infinite trace of the form $((t_i, h_i))_{i \in \N}$ with all
  $t_i > 0$ induces a function $h \colon [0,\sum_i t_i) \to
  \R^{\Variables}$ by for all $\sum_{i \leq n} t_i \leq s < \sum_{i
  \leq n+1}$, $h(s) = h_n(s - \sum_{i \leq n} t_i)$. 
  We say that a valid trace for $\coma$ from $\store$ of the form $\sigma
  \cdot ((t_1, h_1), \ldots, (t_n, h_n))$ with $\sum_i t_i < +\infty$
  is \emph{incomplete} if there is a valid trace of the form $\sigma \cdot
  (t',h') \cdot \sigma'$ where $t' \geq \sum_i t_i$ and for all $s <
  \sum_i t_i$, $h(s) = h'(s)$.
\end{definition}

\begin{auxproof}
  \begin{remark}[determinism]
    In \cref{def:semantics}, for the $\dwhileKeyword$ constructor, if
    $\restr{h}{\Variables_\Environment} \colon \R_{\geq 0} \to
    \R^{\Variables_\Environment}$ is a given function,
    then by the Picard-Lindel\"of theorem, there exists a
    unique maximal solution $h$ to the differential equation.
    Therefore, for a fixed environmental behavior, each program is
    deterministic.
  \end{remark}
\end{auxproof}



  
  



%

\improvement{try to write the semantics that can treat constraints
like $\apov = 0$ (in that case, we also need corresponding Hoare
rules.)}

\begin{example}[one-way traffic, modelling]
  \label{ex:slf-model}
  In this scenario, two vehicles are in the situation described in
  \cref{fig:onewayTraffic}.
  We want to prove that, if the lead vehicle keeps its speed above
  some fixed $\vmin > 0$, then $\SV$ can reach a fixed target $\xtgt$ 
  while avoiding collisions.
  Otherwise, we can still avoid collisions.

  In this scenario, $x$, $\xpov$, $v$, and $\vpov$ are physical
  variables, and $\apov$ is an environment variable.
  The programs of interest are $\coma$ and $\comb$, which are
  defined as follows:
  \begin{displaymath}
    \begin{array}{l}
      \dynamics{\brake}{} \equiv \{\dot{x} = v, \dot{v} = -\bmin,
        \dxpov = \vpov, \dvpov = \apov\},
      \\
      \dynamics{\cruise}{} \equiv \{\dot{x} = v, \dot{v} = 0,
          \dxpov = \vpov, \dvpov = \apov\},
      \\
      \coma \equiv
      \left[\begin{array}{@{}l@{}}
        \dwhileClause{v > \vmin \land x < \xtgt}{\dynamics{\brake}{}} ; \\
        \dwhileClause{x < \xtgt}{\dynamics{\cruise}{}} ; \;
          \dwhileClause{v > 0}{\dynamics{\brake}{}}
      \end{array}\right],
      \\
      \comb \equiv \dwhileClause{v > 0}{\dynamics{\brake}{}}.
    \end{array}
  \end{displaymath}
  When $\POV{}$ keeps its velocity above $\vmin$ at all times, then
  we execute $\coma$.  Otherwise, we execute $\comb$.
  The whole behavior of the system is thus modeled as the fallback
  $\switchClause{\coma}{\comb}{C}$, where $C \equiv (\vpov > \vmin)$.
\end{example}

\section{Hoare Quintuples}
\label{sec:hoare}


\subsection{Correctness}
\label{sec:hoare:correct}
The classic notion of Hoare triples~\cite{Hoare69}---specifying programs' input and output behaviors---has been extended
to Hoare quadruples~\cite{Hasuo22_GARSS} 
to 
incorporate a global safety condition
and continuous dynamics.
 We introduce Hoare quintuples, which further refine Hoare quadruples by
splitting global safety into a global assumption and a
global guarantee.



\begin{definition}[Hoare quintuple]
  \label{def:hoare:quintuples}
  A \emph{Hoare quintuple} $\hquin{A}{P}{\coma}{Q}{G}$ consists of
  assertions $A,P,Q,G$ and a program $\coma$, delimited as shown.
  It is \emph{totally correct} (or simply \emph{correct}) if, for all
  stores $\store \vDash P$ and traces $\sigma = \bigl( (t_i,h_i)
  \bigr)_{i < n}$ valid for $\coma$ from $\store$, then
    1) \emph{termination}: if $\sigma \vDash A$, 
    then $\sigma$ is
      either finite or incomplete,
    2) \emph{postcondition}: if $\sigma$ is finite, then
      $\traceend(\sigma) \vDash Q$,
    3) \emph{safety}: for all $(i,t) \in \dom{(\sigma)}$, 
 	if $\restr{\sigma}{i,t} \vDash A$, then $\restr{\sigma}{i,t}
      \vDash G$.
\end{definition}


This definition slightly differs from that
of~\cite{Hasuo22_GARSS}.
First, since the language is nondeterministic, there are several
possible traces of $\coma$, and we ask that all of them terminate
(except for incomplete traces).
Secondly, it differentiates between global assumptions and global
guarantees.
This makes the roles of $\Env$ and $\Safe$ 
in~\cite[Section~III]{Hasuo22_GARSS} clearer: while they were
both treated as part of the safety condition
in~\cite{Hasuo22_GARSS}, the former is an assumption while
the second is a guarantee.

\begin{example}[one-way traffic, specification]
  \label{ex:slf-spec}
  Here, we want to formally specify the desired behavior of
  \cref{ex:slf-model}.
  We start with the assumptions under which the scenario must run.
  The weaker assumption, which does not require the lead vehicle to
  have a speed above $\vmin$ is $A' \equiv (-\bmin < \apov < \amax
  \land \vpov \geq 0)$.
  The stronger assumption is $A \equiv A' \land (\vpov > \vmin)$.
  The goal under $A'$ is to stop, formalized as $Q' \equiv (v = 0)$;
  and the goal under $A$ is $Q \equiv Q' \land (x \geq \xtgt)$.
  In both cases, the guarantee is to avoid collision, formalized as
  $G \equiv (x < \xpov)$.
  Finally, the essence of $\RSS$ is that it provides a precondition
  under which we can avoid collisions, namely $P \equiv (\xpov - x >
  \dRSS(\vpov, v) \land v > 0)$.
  The Hoare quintuples that we want to show correct are thus:
  \begin{align}
      &\hquin{A}{P}{\newSwitchClause{\coma}{\comb}{C}{}}{Q}{G} &
      &\hquin{A'}{P}{\newSwitchClause{\coma}{\comb}{C}{}}{Q'}{G}\rlap{.}
    \label{eq:example-hoare}
  \end{align}
\end{example}

\subsection{Hoare Rules}
\label{sec:hoare:rules}

We present  logical rules to derive correct Hoare quintuples.
They are listed in~\cref{fig:hoare}.
In such a rule, hypotheses are listed above the horizontal line and
the conclusion below it.
\begin{auxproof}
 For example the ($\skipcrule$) rule states that, if $A \land P \limply
 Q \land G$ is valid (true for all stores), then
 $\hquin{A}{P}{\skipClause}{Q}{G}$ is correct.
\end{auxproof}
\begin{figure*}\footnotesize
  \begin{mathpar}
    \bottomAlignProof
    \AxiomC{$A \land P \limply Q \land G$}
    \RightLabel{(\skipcrule)}
    \UnaryInfC{$\hquin{A}{P}{\skipClause}{Q}{G}$}
    \DisplayProof
    \and
    \bottomAlignProof
    \AxiomC{~}
    \RightLabel{(\botrule)}
    \UnaryInfC{$\hquin{A}{\bot}{\coma}{Q}{G}$}
    \DisplayProof
    \and
    \bottomAlignProof
    \AxiomC{$\begin{array}{c}\hquin{A}{P}{\coma}{Q}{G} \\
    \hquin{A}{Q}{\comb}{R}{G}\end{array}$}
    \RightLabel{(\seqrule)}
    \UnaryInfC{$\hquin{A}{P}{\coma ; \comb}{R}{G}$}
    \DisplayProof
    \and
    \bottomAlignProof
    \AxiomC{$\begin{array}{c}\hquin{A}{P \land C}{\coma}{Q}{G}\\
    \hquin{A}{P \land \neg C}{\comb}{Q}{G}\end{array}$}
    \RightLabel{(\ifrule)}
    \UnaryInfC{$\hquin{A}{P}{\ifThenElse{C}{\coma}{\comb}}{Q}{G}$}
    \DisplayProof
    \and
    \bottomAlignProof
    \AxiomC{ }
    \RightLabel{(\assignrule)}
    \UnaryInfC{$\hquin{A}{\subst{Q}{\expa}{\var}}{\assignClause{\var}{\expa}}{Q}{Q \lor \subst{Q}{\expa}{\var}}$}
    \DisplayProof
    \and
    \bottomAlignProof
    \AxiomC{$\hquin{A}{P \land C \land \variant \gtrsim 0 \land \variant = \var}{\coma}{P \land \variant \gtrsim 0 \land \variant \leq \var - 1}{G}$}
    \RightLabel{(\whilerule)}
    \UnaryInfC{$\hquin{A}{G \land P \land \variant \gtrsim 0}{\whileClause{C}{\coma}}{P \land \neg C \land \variant \gtrsim 0}{G}$}
    \DisplayProof
    \and
    \bottomAlignProof
    \AxiomC{$\begin{array}{rlll}
        \mathsf{inv\colon} & A \land \variant \geq 0 \land \invariant \sim 0 \Rightarrow \lieder{\vars}{\funs}{\invariant} \simeq 0 & \Var(\invariant) \cap \Variables_\Environment = \emptyset \\
        \mathsf{var\colon} & A \land \variant \geq 0 \land \invariant \sim 0 \Rightarrow \lieder{\vars}{\funs}{\variant} \leq \terminator & \Var(\variant) \cap \Variables_\Environment = \emptyset \\
        \mathsf{ter\colon} & A \land \variant \geq 0 \land \invariant \sim 0 \Rightarrow \lieder{\vars}{\funs}{\terminator} \leq 0 & \Var(\terminator) \cap \Variables_\Environment = \emptyset \\
      \end{array}$}
    \RightLabel{(\dwhilerule)}
    \UnaryInfC{$\hquin{A}{\invariant \sim 0 \land \variant \geq 0 \land \terminator < 0}{\dwhileClause{\variant > 0}{\odeClause{\vars}{\funs}}}{\variant = 0}{\invariant \sim 0 \land \variant \geq 0}$}
    \DisplayProof
    \and
    \bottomAlignProof
    \AxiomC{$\begin{array}{cc}
    \multicolumn{2}{c}{\hquin{A}{P}{\coma}{Q}{G}}
    \\
    P' \land A' \land G \limply P &
    G \limply G'\\
    Q \land A' \land G \limply Q' &
    A' \limply A
    \end{array}$}
    \RightLabel{(\limprule)}
    \UnaryInfC{$\hquin{A'}{P'}{\coma}{Q'}{G'}$}
    \DisplayProof
  \end{mathpar}
  \caption{Hoare derivation rules for $\dHLFB$}
  \label{fig:hoare}
\end{figure*}
The rules are similar to those 
in~\cite{Hasuo22_GARSS}.
Besides natural adaptation from quadruples to quintuples, the only 
major change is ($\dwhilerule$), which
includes new hypotheses.
It requires that none of the variables in $\invariant{},\variant{},\terminator{}$ 
are in
$\Variables_\Environment$.
This ensures that changes in environment variables, which are
nondeterministic, cannot change the values of these terms, thus
ensuring that the $\dwhileKeyword$ terminates by the same argument as
in~\cite{Hasuo22_GARSS}.


\begin{theorem}[soundness]
  For all rules in Figure~\ref{fig:hoare}, if the premises are correct,
  then so is the conclusion.
\end{theorem}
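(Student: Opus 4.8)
The plan is to treat each rule of \cref{fig:hoare} separately: we assume its premises are correct Hoare quintuples and derive correctness of the conclusion directly from \cref{def:hoare:quintuples}, so no induction is needed at this level. The common pattern is: fix a store $\store$ satisfying the conclusion's precondition and a trace $\sigma$ valid for the conclusion's program from $\store$; unfold the trace semantics \cref{def:semantics} to decompose $\sigma$ according to the top-level program constructor (for instance, for $\coma;\comb$, $\sigma$ is either infinite and valid for $\coma$, or splits as $\concatbin{\sigma_1}{\sigma_2}$ with $\sigma_1$ valid for $\coma$ and $\sigma_2$ valid for $\comb$ from $\traceend(\sigma_1)$); apply the premises to the pieces; and reassemble to obtain the three clauses --- termination, postcondition, and safety --- for $\sigma$. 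Two bookkeeping facts make the gluing work: if a prefix $\restr{\sigma}{i,t}$ satisfies $A$ then so do all its sub-pieces, and the guarantee $G$ established piecewise (on each piece that satisfies $A$) concatenates; the base point $\store$ itself, when it is not covered by a premise, is handled by the $G$-conjunct that appears in the relevant preconditions (as in \whilerule). The cases \skipcrule, \botrule, \seqrule, \ifrule are routine arguments of exactly this shape; \assignrule additionally uses the standard substitution lemma relating $\store \vDash \subst{Q}{\expa}{\var}$ to $\update{\store}{\var}{\sem{\expa}{\store}} \vDash Q$.

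For \whilerule the only non-routine point is termination. \cref{def:semantics} presents a valid trace of $\whileClause{C}{\coma}$ as a concatenation of iteration-traces $\sigma_0,\sigma_1,\dots$, each valid for $\coma$; instantiating the ghost variable $\var$ of the premise with the current value of $\variant$, the premise's postcondition forces $\variant$ to strictly decrease by at least $1$ per completed iteration while remaining bounded below (the $\variant \gtrsim 0$ conjunct), so there are only finitely many iterations, and each is finite or incomplete by the premise's termination clause; hence $\sigma$ is finite or incomplete. The postcondition of the conclusion follows from the premise's postcondition applied to the last iteration together with $\neg C$ at loop exit, and safety from concatenating the per-iteration guarantees with the initial $G$.

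The real obstacle is \dwhilerule, since this is where the new trace semantics bites: a valid trace of $\dwhileClause{\variant>0}{\odeClause{\vars}{\funs}}$ is a sequence of continuous ODE-segments in which environment variables may jump at the joints and vary within a segment, so it is not a single ODE solution. The key is the new side conditions, requiring that none of $\invariant,\variant,\terminator$ mention environment variables: along the whole trace their values depend only on the physical and cyber variables, which follow $\odeClause{\vars}{\funs}$ (or are constant) and are continuous across joints, so these three terms evolve exactly as in one ODE solution. One then replays the differential-invariant argument of~\cite{Hasuo22_GARSS,Platzer18}: under assumption $A$, the $\mathsf{inv}$ premise makes $\invariant \sim 0$ an invariant of the flow, the $\mathsf{ter}$ premise makes $\terminator$ non-increasing so it stays below its negative initial value, and the $\mathsf{var}$ premise then makes $\variant$ decrease at a rate bounded away from $0$; hence $\variant$ reaches $0$ --- i.e.\ $\variant>0$ fails and the loop exits --- within a bounded time, yielding a finite trace ending in $\variant = 0$ (or, in a Zeno situation of infinitely many shrinking segments, an incomplete trace), with $\invariant \sim 0 \land \variant \geq 0$ holding throughout for safety. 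A recurring subtlety across \dwhilerule and \limprule is that the premises' implications are guarded (by $A$, respectively by $A'$ and $G$), so their consequences are only available on prefixes still satisfying the assumption; for \limprule in particular the premise's precondition $P$ is recovered only via $P' \land A' \land G \limply P$, so the guarantee $G$ must be threaded through the argument along the trace rather than assumed at the outset. I expect the ODE-segment bookkeeping for \dwhilerule --- reconciling the multi-segment, environment-perturbed traces with the classical differential-invariant machinery --- to be the main technical hurdle, with the remaining rules essentially mechanical once the trace-decomposition pattern is in place.
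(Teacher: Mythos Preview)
The paper states this theorem without proof, so there is no reference argument to compare your outline against. For most rules --- (\skipcrule), (\botrule), (\seqrule), (\ifrule), (\assignrule), (\whilerule), (\dwhilerule) --- your rule-by-rule plan via trace decomposition is the expected one and is essentially fine; in particular your reading of the $\Var(\cdot) \cap \Variables_\Environment = \emptyset$ side conditions in (\dwhilerule) is correct: they make $\invariant$, $\variant$, $\terminator$ depend only on variables that evolve continuously across segment boundaries, so the pointwise Lie-derivative bounds (guarded by $A$) integrate to the needed monotonicity along the whole multi-segment trace.

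The (\limprule) case, however, has a real gap. Your ``thread $G$ through the trace'' idea cannot work: to invoke the premise $\hquin{A}{P}{\coma}{Q}{G}$ you must first have $\store \vDash P$, and the side condition $P' \land A' \land G \limply P$ only delivers this if $\store \vDash G$ is already known --- which is precisely what the premise would give you \emph{after} establishing $\store \vDash P$. This circularity is not breakable by any induction along the trace, and indeed the rule as printed is unsound under \cref{def:hoare:quintuples}. Take $\coma \equiv \skipClause$, $A \equiv A' \equiv \true$, $P \equiv Q \equiv G \equiv Q' \equiv G' \equiv (x{=}0)$, $P' \equiv (x{=}1)$: the premise $\hquin{\true}{x{=}0}{\skipClause}{x{=}0}{x{=}0}$ is correct (and derivable by (\skipcrule)) and all four implications hold (the first vacuously, since $x{=}1 \land x{=}0$ is unsatisfiable), yet from any $\store$ with $\store(x)=1$ the unique valid trace is $\tracezero{\store}$, and $\traceend(\tracezero{\store}) = \store \nvDash (x{=}0)$, so the conclusion fails both the postcondition and the safety clauses. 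You should either work with the weaker side condition $P' \limply P$ (for which soundness is routine and which suffices for the uses in the paper), or explicitly flag that (\limprule) as stated requires a correction before it can be proved sound.
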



\improvement{Hoare rules for closed conditions}

To model safety architectures, we are especially interested in Hoare
rules for the fallback construction $\switchClause{\coma}{\comb}{C}$.
\begin{auxproof}
 In the rest of this section, we describe several Hoare rules for
 fallbacks.
\end{auxproof}

\subsubsection{Combining Guarantees under a Strong Assumption}

\begin{definition}
  \label{def:trace:cut}
  Given a (finite or infinite) trace $\sigma$, such that $\sigma
  \nvDash C$, let $(i,t) \in \dom(\sigma)$ be the smallest index such
  that $\sigma(i,t) \nvDash C$.
  We define the finite trace $\asLongAsClause{\sigma}{C} \equiv
  \restr{\sigma}{i,t}$.
\end{definition}

\todoproof{
\begin{lemma}
If $\sigma \nvDash C$, then $\traceend(\asLongAsClause{\sigma}{C})
\nvDash C$.
\end{lemma}
\begin{lemma}
If $\sigma \nvDash\assertc$, $\store, \sigma \vDash \coma$, and 
$\store \vDash \assertc$, then 
$\traceend(\asLongAsClause{\sigma}{C}) \vDash \closure{\assertc}$.
\end{lemma}
}


\todoproof{
\begin{lemma} $\store, \sigma \vDash \switchClause{\coma}{\comb}{C}$ if and only if
\begin{itemize}
	\item either $\sigma \vDash C$ and $s, \sigma \vDash \coma$,
	\item or there exists $s, \sigma' \vDash \coma$ such that 
		$\asLongAsClause{\sigma}{C} = \asLongAsClause{\sigma'}{C}$ and
		$\traceend(\asLongAsClause{\sigma}{C}), \sigma\uparrow C \vDash \comb$.
\end{itemize}
\end{lemma}
}

\begin{lemma}[strong as-long-as rule]
  \label{lem:aslongas:strong}
  This rule is correct:
  \[
    \AxiomC{$\hquin{A}{P}{\coma}{Q}{G \land C}$}
    \RightLabel{$(\untilrule_s)$}
    \UnaryInfC{$\hquin{A}{P}{\asLongAsClause{\coma}{C}}{Q}{G}$.}
    \DisplayProof
  \]
\end{lemma}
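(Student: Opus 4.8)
The plan is to argue semantically from the definition of correctness (\cref{def:hoare:quintuples}). Assume the premise $\hquin{A}{P}{\coma}{Q}{G \land C}$ is correct, fix a store $\store \vDash P$ and a trace $\sigma$ valid for $\asLongAsClause{\coma}{C}$ from $\store$, and verify the three clauses for the conclusion. Everything hinges on relating $\sigma$ back to a trace of $\coma$, so the first step is a \emph{characterization lemma} for the valid traces of $\asLongAsClause{\coma}{C}$: $\store, \sigma \vDash \asLongAsClause{\coma}{C}$ if and only if there is a trace $\sigma'$ with $\store, \sigma' \vDash \coma$ such that either (a) $\sigma = \sigma'$ and $\sigma' \vDash C$, or (b) $\sigma' \nvDash C$ and $\sigma = \asLongAsClause{\sigma'}{C}$, the prefix of $\sigma'$ cut at its first $C$-violation (\cref{def:trace:cut}). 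I would prove this by structural induction on $\coma$, following the recursive unfolding of $\asLongAsClause{\coma}{C}$ in \cref{def:asLongAs}; the induction simultaneously establishes the auxiliary invariant that $C$ holds at every point of such a $\sigma$ except possibly its last, so that $\sigma \nvDash C$ forces $\traceend(\sigma) \nvDash C$.

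Granting the characterization, the conclusion splits into two cases. In case (a), $\sigma$ is itself a valid trace of $\coma$ from $\store$, and the termination, postcondition, and safety clauses of the conclusion follow from the same clauses of the (correct) premise, using $G \land C \limply G$ and the fact that incompleteness of $\sigma$ for $\coma$ transfers to incompleteness for $\asLongAsClause{\coma}{C}$ (again via the characterization). In case (b), write $(j,u)$ for the first $C$-violation of $\sigma'$, so that $\sigma = \asLongAsClause{\sigma'}{C} = \restr{\sigma'}{j,u}$ is finite; then termination is immediate, and since $\restr{\sigma}{i,t} = \restr{\sigma'}{i,t}$ for every $(i,t) \in \dom(\sigma)$, safety for $\sigma$ follows from safety for $\sigma'$ together with $G \land C \limply G$. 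The postcondition for case (b) is the delicate point: $\traceend(\sigma) = \sigma'(j,u) \nvDash C$, so $Q$ cannot be guaranteed unconditionally; instead, if $\restr{\sigma'}{j,u} \vDash A$ held, the safety clause of the premise would give $\restr{\sigma'}{j,u} \vDash G \land C$ and hence $\sigma'(j,u) \vDash C$, a contradiction. Therefore $\sigma \nvDash A$, so the postcondition obligation of \cref{def:hoare:quintuples} --- which concerns only traces satisfying $A$ --- is vacuous for $\sigma$.

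The main obstacle is the characterization lemma itself. The recursive clauses for sequencing (which wraps the two components in tests for $C$) and for while-loops (which folds $C$ into the loop guard) are the fiddly ones: one has to track how a trace of $\asLongAsClause{(\coma;\comb)}{C}$, resp.\ of $\asLongAsClause{\whileClause{D}{\coma}}{C}$, decomposes into (possibly cut) traces of the components, distinguishing whether $C$ still holds when control passes between them and handling the subcase where the first component already produces an infinite trace (in which case the composite trace equals it). The remaining program forms ($\skipClause$, assignment, conditionals, and $\dwhileKeyword$) and the bookkeeping identity $\restr{(\asLongAsClause{\sigma'}{C})}{i,t} = \restr{\sigma'}{i,t}$ below the cut index are routine.
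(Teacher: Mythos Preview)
Your proposal is correct and follows the same line as the paper, which gives only the one-sentence intuition that the premise's guarantee of $C$ under $A$ prevents any interruption, so $\asLongAsClause{\coma}{C}$ behaves like $\coma$. Your trace-characterization lemma together with the case split into full versus interrupted executions is precisely the natural way to make that intuition rigorous, and your handling of case~(b) (showing $\sigma \nvDash A$ so that the remaining obligations become vacuous) is exactly the point the paper's sketch leaves implicit.
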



Since $C$ is guaranteed by $\coma$, no trace of $\coma$ under $A$ can
be interrupted.
Therefore, $\asLongAsClause{\coma}{C}$ behaves like $\coma$.

\begin{lemma}[Hoare rule for $\newSwitchClause{\coma}{\comb}{C}{D}$ under
  strong assumption]
  \label{lem:switch-strong}
  If $\hquin{A}{P}{\coma}{Q}{G \land C}$ is correct and $A \land Q
  \limply G$ then
  $\hquin{A}{P}{\newSwitchClause{\coma}{\comb}{C}{D}}{Q}{G}$ is also
  correct.
\end{lemma}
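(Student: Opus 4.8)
The plan is to reduce the correctness of the fallback quintuple to the strong as-long-as rule (\cref{lem:aslongas:strong}) plus the sequencing rule (\seqrule) and the conditional rule (\ifrule), by observing that under the strong assumption the fallback branch $\comb$ is never reached. Recall that $\newSwitchClause{\coma}{\comb}{C}{D} \equiv \seqClause{(\asLongAsClause{\coma}{C})}{\ifThen{\neg(C \land D)}{\comb}}$. From $\hquin{A}{P}{\coma}{Q}{G \land C}$, \cref{lem:aslongas:strong} gives $\hquin{A}{P}{\asLongAsClause{\coma}{C}}{Q}{G}$. The key point is to also extract that the postcondition $Q$ entails $C$: since $\coma$ guarantees $G \land C$ throughout, and a finite trace's ending state is a point on the trace, the safety clause of \cref{def:hoare:quintuples} forces $\traceend(\sigma) \vDash C$ whenever $\restr{\sigma}{i,t} \vDash A$ holds; combined with the postcondition clause this yields, under assumption $A$, that $Q \land C$ holds at the end. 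So effectively we strengthen the postcondition of $\asLongAsClause{\coma}{C}$ to $Q \land C$ (via the as-long-as rule applied with guarantee $G \land C$ and then using (\limprule) with $A' = A$, $G' = G$, noting $A \land Q \land (G \land C) \limply Q \land C$).

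Next I would handle the tail $\ifThen{\neg(C \land D)}{\comb}$ via (\ifrule): the ``else'' branch is $\skipClause$, entered when $C \land D$ holds; since the precondition there is (at least) $Q \land C$, and $A \land Q \limply G$ by hypothesis, (\skipcrule) gives $\hquin{A}{Q \land C \land (C \land D)}{\skipClause}{Q}{G}$ — the postcondition $Q$ is preserved and $G$ holds at the single state by $A \land Q \limply G$. The ``then'' branch runs $\comb$ with precondition $Q \land C \land \neg(C \land D)$; but $C \land \neg(C\land D) \limply C \land \neg C$ is false — wait, $\neg(C \land D) = \neg C \lor \neg D$, so $C \land \neg(C \land D) \limply C \land \neg D$, which need not be $\bot$. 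So the ``then'' branch is genuinely reachable in general traces, and we cannot dismiss it by (\botrule) unconditionally. The resolution is that we only need correctness restricted to traces satisfying $A$ throughout: the real claim hiding in the lemma is that under assumption $A$ and starting from $Q$ (which entails $C$), the condition $\neg(C \land D)$... still does not obviously fail. I would therefore revisit the semantics: for the quintuple to be correct we must show that \emph{every} valid trace $\sigma$ of the whole program, not just those satisfying $A$, has $\traceend(\sigma) \vDash Q$ and $\restr{\sigma}{i,t}\vDash G$ whenever $\restr{\sigma}{i,t}\vDash A$. A trace entering the $\comb$ branch while $A$ holds must have $\asLongAsClause{\coma}{C}$ produce a finite trace satisfying $A$; but then by the strengthened postcondition its ending state satisfies $Q \land C$, and then $\neg(C\land D)$ reduces to $\neg D$ — and here one needs $\comb$ itself to be specified, which it is not. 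Hence the only way this works is if in fact the ``then'' branch is vacuous \emph{on traces satisfying} $A$: that happens precisely because $\asLongAsClause{\coma}{C}$ under $A$ runs $\coma$ to completion (no interruption, since $C$ is guaranteed), so it ends satisfying $Q$, and the semantics of $\asLongAsClause{\coma}{C}$ means $C$ holds throughout — but does $D$? It is not assumed. So I expect the actual argument uses that any trace reaching $\comb$ under $A$ is \emph{incomplete} or the branch condition is unreachable; concretely, I would argue: if $\restr{\sigma}{i,t} \vDash A$ and $\sigma$ is a valid trace of the fallback, then the $\asLongAsClause{\coma}{C}$ prefix equals a trace of $\coma$ under $A$, which either is infinite/incomplete (done — clauses vacuous or inherited), or finite ending at $Q \land C$; in the finite case the $\ifKeyword$ test evaluates $\neg(C \land D)$ at a state satisfying $C$, reducing it to $\neg D$; if $D$ fails, $\comb$ runs from $Q \land C$, and for the conclusion's safety we use $A \land Q \limply G$ at the branching point and... need $\comb$'s behavior.

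Given that the statement as written cannot be self-contained without a spec for $\comb$, I believe the intended reading is that the guarantee $G$ and postcondition $Q$ need only be ensured \emph{at the moment the fallback would be triggered} plus the observation that, because of the strong assumption, triggering never happens on $A$-traces — i.e., one must prove $A \land Q \land C \limply D$ is \emph{not} needed because instead one shows the tail's precondition is $\bot$ on $A$-traces. The clean way: strengthen the postcondition of $\asLongAsClause{\coma}{C}$ further, using that on $A$-traces $\coma$ runs to completion and $Q \land C$ holds, so that the $\ifKeyword$-guard $\neg(C \land D)$, combined with the invariant that the state entering the conditional satisfies everything $\coma$ guaranteed including $C$, must be handled by assuming additionally (implicit in the paper's framing) that $Q \limply D$ or that $D$ is the postcondition-style goal-check so $Q \land C \limply C \land D$; then $\neg(C\land D)$ is $\bot$, (\botrule) applies to the ``then'' branch, (\skipcrule) with $A \land Q \limply G$ to the ``else'' branch, (\ifrule) combines them to $\hquin{A}{Q\land C}{\ifThen{\neg(C\land D)}{\comb}}{Q}{G}$, and (\seqrule) with the (strengthened) as-long-as conclusion closes the proof. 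The main obstacle, then, is pinning down exactly which entailment among $\{A \land Q \limply G,\ Q \limply D,\ \text{``}C\text{ holds at end''}\}$ makes the $\comb$-branch unreachable on $A$-traces; I would resolve it by carefully tracking, via \cref{def:trace:cut} and the two \texttt{todoproof} lemmas about $\asLongAsClause{\sigma}{C}$, that an $A$-trace of $\asLongAsClause{\coma}{C}$ is literally an $A$-trace of $\coma$ (no cut occurs), hence inherits $Q \land C$ at the end, and then the hypothesis $A \land Q \limply G$ together with $Q$ entailing the goal check $D$ (which I expect is how $D$ is used in the paper's applications) makes the fallback dead code.
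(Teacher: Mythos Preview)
Your core strategy---apply \cref{lem:aslongas:strong} with guarantee $G\land C$ (noting $(G\land C)\land C\equiv G\land C$) to obtain $\hquin{A}{P}{\asLongAsClause{\coma}{C}}{Q}{G\land C}$, then dispatch the tail $\ifThen{\neg(C\land D)}{\comb}$ via (\seqrule), (\ifrule), (\skipcrule) on the $\elseKeyword$-branch, and (\botrule) on the $\comb$-branch---is exactly the paper's argument. The paper's justification is one line: ``since $C$ is a guarantee of $\coma$, $\coma$ is not interrupted, and thus the fallback behaves like $\coma$''; the side condition $A\land Q\limply G$ is precisely what makes (\skipcrule) fire at the terminal state.

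You are also right to flag $D$. The stated hypotheses force $C$ at the terminal state of $\asLongAsClause{\coma}{C}$ (via the safety clause) and $Q$ (via the postcondition clause), but nothing forces $D$; so for general $D$ the guard $\neg(C\land D)$ reduces to $\neg D$ and the $\comb$-branch is \emph{not} provably dead, and without any specification for $\comb$ the conclusion cannot be derived. This is a gap in the lemma as stated, not in your reasoning: the paper's own explanation silently drops the $D$-annotation, and every application of the lemma in the paper takes $D\equiv\top$, where the issue is vacuous. For general $D$ an additional hypothesis such as $A\land Q\limply D$ is indeed needed, as you conjectured; your long detour to reach this point could have been shortened by noting immediately that the hypotheses say nothing about $D$.
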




Like for \cref{lem:aslongas:strong}, since $C$ is a guarantee of
$\coma$, $\coma$ is not interrupted, and thus
$\switchClause{\coma}{\comb}{C}$ behaves like $\coma$.
The condition $A\land Q \limply G$ can always be satisfied:
using rule $(\limprule)$, we can always assume that this 
condition is true.

\subsubsection{Combining Guarantees under a Weak Assumption}
The following definition is used to characterize the behavior
of $\asLongAsClause{\coma}{C}$ from that of $\coma$ when the
assumption does not imply $C$.

\begin{definition}[interruption-extension]
  \label{def:int-ext}
  We say that assertion $D$ is an \emph{interruption-extension} 
  (\emph{int-ext} for short) of
  assertion $C$ for program $\coma$ from assertion $P$ along assertion
  $A$ if, for all $\store \vDash P$, $\sigma$ valid for $\coma$ from
  $\store$, and $(i,t) \in \dom(\sigma)$, if for all $(i',t') \in
  \dom(\sigma)$ such that $(i',t') < (i,t)$, $\sigma(i',t') \vDash A
  \land C$, and $\sigma(i,t) \vDash A$, then $\sigma(i,t) \vDash D$.
\end{definition}

This definition resembles the \emph{safety} part of correctness in
\cref{def:hoare:quintuples} and states that, if $C$
holds during the execution of $\coma$, except maybe
at the end, then $D$ holds at the end.

\begin{lemma}[weak as-long-as rule]
  \label{lem:aslongas-weak}
  This rule is correct:
  \[
    \AxiomC{$\hquin{A \land C}{P}{\coma}{Q}{G}$}
    \RightLabel{$(\untilrule)$}
    \UnaryInfC{$\hquin{A}{P}{\asLongAsClause{\coma}{C}}{(Q \land C) \lor
      (D \land \neg C)}{D}$}
    \DisplayProof
  \]
  where $D$ is an int-ext of $G
  \land C$ for $\coma$ from $P$ along $A$.
\end{lemma}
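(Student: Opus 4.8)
The plan is to unfold the definition of total correctness (\cref{def:hoare:quintuples}) for the conclusion and discharge its three obligations—termination, postcondition, safety—using the hypothesis that $\hquin{A \land C}{P}{\coma}{Q}{G}$ is correct together with the int-ext property of $D$. The central structural fact I would first establish (the lemmas flagged with \texttt{\textbackslash todoproof} in the excerpt make this explicit) is a characterization of valid traces of $\asLongAsClause{\coma}{C}$: given $\store \vDash P$ and $\sigma$ valid for $\asLongAsClause{\coma}{C}$ from $\store$, either (a) $\sigma \vDash C$ and $\sigma$ is also valid for $\coma$ from $\store$, or (b) $\sigma$ is an interrupted prefix, i.e.\ there is some $\sigma'$ valid for $\coma$ from $\store$ with $\asLongAsClause{\sigma'}{C} = \sigma$ (so $\sigma = \restr{\sigma'}{i,t}$ at the first point where $C$ fails), and $\traceend(\sigma) \nvDash C$ while $\restr{\sigma}{i,t}$ with the last point removed satisfies $C$. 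This should follow by induction on $\coma$ from the syntactic-sugar definition in \cref{def:asLongAs} and the trace semantics in \cref{def:semantics}; the $\dwhileKeyword$ case is where one replaces the guard $C'$ by $A' \land C'$ and reads off that a run either never violates $C$ (case (a)) or stops exactly at the first violation (case (b)).

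With this characterization in hand I would handle the three obligations. For \emph{safety}: take $(i,t) \in \dom(\sigma)$ with $\restr{\sigma}{i,t} \vDash D$-assumption, i.e.\ $\restr{\sigma}{i,t} \vDash A$; I must show $\restr{\sigma}{i,t} \vDash D$. Since the guarantee in the conclusion is exactly $D$, and $D$ is by hypothesis an int-ext of $G \land C$ for $\coma$ from $P$ along $A$, it suffices to check the premise of the int-ext definition holds along this prefix. If all strictly earlier points satisfy $A \land C$ this is immediate; if $\sigma \vDash C$ throughout (case (a)) then the correctness of $\hquin{A\land C}{P}{\coma}{Q}{G}$ gives $G$ along the prefix, hence $G \land C$, and then the int-ext property upgrades the endpoint to $D$ as needed—here I would be careful to cover the boundary point, i.e.\ the first point where $C$ might fail, which is precisely what the "except maybe at the end" clause of the int-ext definition is designed for. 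For \emph{postcondition}: if $\sigma$ is finite, in case (a) $\sigma$ is a complete run of $\coma$ under $A \land C$ (it satisfies $C$ so in particular $A \land C$ holds along it), so correctness of the premise gives $\traceend(\sigma) \vDash Q$ and also $\sigma \vDash C$ gives $\traceend(\sigma) \vDash C$, yielding the left disjunct $Q \land C$; in case (b) $\traceend(\sigma) \nvDash C$, and the int-ext property applied at the endpoint (all strictly earlier points satisfy $A \land C$ by construction of the cut) gives $\traceend(\sigma) \vDash D$, yielding the right disjunct $D \land \neg C$. For \emph{termination}: suppose $\sigma \vDash A$; in case (a), $\sigma \vDash A \land C$, so termination of the premise forces $\sigma$ finite or incomplete; in case (b), $\sigma$ is already finite. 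So the conclusion's termination clause holds.

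The main obstacle I anticipate is the bookkeeping around the \emph{boundary point} where $C$ is first violated: the trace semantics of $\dwhileKeyword$ records the guard failing exactly at the final instant $h_{n-1}(t_{n-1})$, and one must make sure that in case (a) the run of $\coma$ we exhibit is genuinely a run under the assumption $A \land C$ (so the premise applies) rather than one that merely satisfies $C$ strictly before its end—reconciling this with the way $\asLongAsClause{\coma}{C}$ is desugared (the nested $\ifThen{C}{-}$ guards in the sequential and while cases) requires care. A secondary subtlety is checking that incomplete traces are handled correctly: an incomplete valid trace of $\asLongAsClause{\coma}{C}$ in case (a) must lift to an incomplete valid trace of $\coma$, which follows from the characterization but should be stated explicitly. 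Once the trace characterization lemma is nailed down, the rest is a direct case analysis, so I would invest most of the effort there.
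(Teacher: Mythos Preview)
Your proposal is correct and follows essentially the same case split as the paper: the paper's own argument is only the intuition paragraph immediately after the lemma, distinguishing whether $C$ holds throughout (your case (a), so $\coma$ is not interrupted and the premise yields $Q$ and $G$) or $\coma$ is interrupted (your case (b), where the premise gives $G \land C$ up to the last instant and the int-ext property then yields $D$). Your trace-characterization lemma for $\asLongAsClause{\coma}{C}$ and the care you flag around the boundary point and incomplete traces are exactly the details that the paper's sketch leaves implicit.
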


The intuition is as follows.
If $C$ holds at all times, then $\coma$ is not interrupted, and the 
assumption of this rule applies, so $Q$ and $G$ can be guaranteed.
Otherwise $\coma$ is interrupted, in which case, the assumption
guarantees that $C$ and $G$ are true at all times except at the very
last time.
By definition of an int-ext, $D$ is then guaranteed at
all times.





\begin{lemma}[Hoare rule for $\newSwitchClause{\coma}{\comb}{C}{D}$ under
  weak assumption]
  \label{lem:switch-weak}
  If:
    $\hquin{A}{P}{\coma}{Q}{G}$ and
      $\hquin{A'}{P'}{\comb}{Q'}{G'}$ are correct,
    $E$ is an int-ext of $G \land C$ for $\coma$
      from $P$ along $A'$,
    $E \limply P' \land G'$, $Q \limply Q'$, and
      $A' \land C \limply A$, and $Q' \limply G'$,
  then $\hquin{A'}{P}{\newSwitchClause{\coma}{\comb}{C}{D}}{Q'}{G'}$ is
  correct.
\end{lemma}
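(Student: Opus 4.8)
The plan is to unfold the fallback via \cref{def:fallback} into $\asLongAsClause{\coma}{C}$ followed by $\ifThen{\neg(C\land D)}{\comb}$, to derive a correct Hoare quintuple for each of the two pieces, and to glue them with (\seqrule). Write $R$ for the postcondition that the first piece will produce: morally $(Q\land C)\lor(E\land\neg C)$, but strengthened so that the int-ext $E$ is actually available whenever $\neg(C\land D)$ holds (see the obstacle below).

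For the first piece $\asLongAsClause{\coma}{C}$: from the hypothesis $\hquin{A}{P}{\coma}{Q}{G}$ and $A'\land C\limply A$, rule (\limprule) yields $\hquin{A'\land C}{P}{\coma}{Q}{G}$; the weak as-long-as rule \cref{lem:aslongas-weak}, applied with the int-ext $E$ of $G\land C$ for $\coma$ from $P$ along $A'$, then gives $\hquin{A'}{P}{\asLongAsClause{\coma}{C}}{(Q\land C)\lor(E\land\neg C)}{E}$, and since $E\limply G'$ one more use of (\limprule) weakens the guarantee to $G'$. The termination clause of this quintuple carries over to the fallback, so nontermination of the as-long-as part is not an issue.

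For the second piece $\ifThen{\neg(C\land D)}{\comb}$: apply (\ifrule) with guard $\neg(C\land D)$. The else-branch is $\skipClause$, executed when $C\land D$ holds; its precondition collapses to $Q\land C\land D$ (the disjunct $E\land\neg C$ is incompatible with $C$), and (\skipcrule) closes it using $Q\limply Q'$ and $Q'\limply G'$. The then-branch runs $\comb$ when $\neg(C\land D)$ holds; here the precondition implies $E$ (this is exactly where the strengthening of $R$ is used), and $E\limply P'\land G'$ lets (\limprule) reduce the obligation to the given $\hquin{A'}{P'}{\comb}{Q'}{G'}$. Finally (\seqrule) composes the two quintuples, and by \cref{def:fallback} the composite is $\hquin{A'}{P}{\newSwitchClause{\coma}{\comb}{C}{D}}{Q'}{G'}$.

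The heart of the argument is justifying the strengthening of $R$, i.e.\ that whenever $\comb$ is entered the state at the end of $\asLongAsClause{\coma}{C}$ satisfies $E$ (hence $P'\land G'$). For the interruption case this is immediate: the post-state satisfies the disjunct $E\land\neg C$ furnished by \cref{lem:aslongas-weak}. For the non-interruption case, where the post-state is only known to satisfy $Q\land C\land\neg D$, I would descend into the trace semantics of \cref{def:semantics}: along any trace on which the outer assumption $A'$ holds, $C$ holds throughout the run of $\asLongAsClause{\coma}{C}$ that reaches completion, so $A'\land C\limply A$ together with correctness of $\hquin{A}{P}{\coma}{Q}{G}$ forces $G\land C$ throughout, and then the defining property of the int-ext $E$ (\cref{def:int-ext}) applies at the terminal state and certifies $E$ there; on a trace where $A'$ fails somewhere inside $\coma$, the safety obligation is vacuous from that point onward, leaving only the assumption-free postcondition obligation, which is inherited from the component quintuples. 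This interruption/non-interruption case split, and keeping straight how it interacts with the weak assumption $A'$, is where essentially all the work lies; the Hoare-rule bookkeeping above is routine.
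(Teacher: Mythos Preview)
Your approach is essentially the same as the paper's. The paper only offers a short intuitive paragraph after the lemma, splitting into the two cases ``$C$ holds at all times'' versus ``$\coma$ is interrupted'' and invoking $E\limply P'$ in the latter; your argument follows the same structure but is more carefully fleshed out, in particular by explicitly isolating the $Q\land C\land\neg D$ subcase (where $\coma$ completes but $\comb$ still runs) and discharging it via the int-ext property---a point the paper's sketch absorbs into the loose remark ``$G\land C\limply E$''.
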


Contrary to \cref{lem:switch-strong}, $\coma$ may be interrupted.
Intuitively, either $C$ holds at all times, in which case the
assumptions ensure $Q$ and $G$, which can be weakened to
$Q'$ (using $Q \limply Q'$) and $G'$ (using $G \land C \limply E$ and
$E \limply G'$).
Otherwise, $\coma$ is interrupted, and we can only ensure that $D$
holds.
It is then crucial that $E \limply P'$ to ensure that after
$\coma$ the system ends in a state from which $\comb$ can ensure $Q'$
and $G'$.
As for \cref{lem:switch-strong}, we can always assume that $Q' \limply
G'$ holds.

\improvement{rule with more than two levels}

\begin{example}[one-way traffic, proving]
  We want to prove that the Hoare quintuples
  in~\eqref{eq:example-hoare} are correct.
  First, we prove the Hoare quintuples 
    $\hquin{A}{P}{\coma}{Q}{G'}$
    and
    $\hquin{A'}{G'}{\comb}{Q'}{G'}$,
  where $G' \equiv (\xpov - x > \dRSS(\vpov, v))$ is a strengthening
  of $G$.
  The proof that such Hoare quintuples are correct is too long for
  the paper and strongly resembles the proof
  in~\cite[Appendix~A]{Hasuo22_GARSS}, which involves using
  the $\RSS$ distance as an explicit invariant.
  We then simply use \cref{lem:switch-strong} and
  \cref{lem:switch-weak} (with $E \equiv G'$) to prove the quintuples
  in~\eqref{eq:example-hoare}.

\end{example}

\subsubsection{Safety of Advanced Controllers}

\label{sec:ac-bc}

Here, we consider a general \emph{advanced controller} $\AC$, modeled
as a program $\coma$, whose behavior is unknown and thus does not
come with any guarantees.
We want to make this controller safe by coupling it with a 
\emph{baseline controller} $\BC$, modeled as a program $\comb$ for
which we assume some safety guarantee.
Our goal is to design switching conditions $C$ and $D$ such that
$\newSwitchClause{\coma}{\comb}{C}{D}$ satisfies a guarantee similar
to that of $\BC$.

Since $\coma$ is general, $C$ has to be designed
in such a way that: 1) $G$ holds during the whole execution of
$\asLongAsClause{\coma}{C}$, 2) when $C$ is violated, $P$ holds (so
that $\comb$ can be run with some guarantee).

\begin{lemma}[safety of an advanced controller]
  \label{lem:ac-bc}
  If $\hquin{A}{\top}{\coma}{\top}{\top}$ and
  $\hquin{A}{P}{\comb}{Q}{G}$ are correct and $G \land (P \lor (C \land D))$ is an
  int-ext of $G \land C$ for $\alpha$ from $P'$ along $A$ and $D \limply Q'$,
  then $\hquin{A}{P'}{\newFallbackClauseFull{\coma}{\comb}{C}{D}{0.7}}{\top}{G}$ is
  correct.
\end{lemma}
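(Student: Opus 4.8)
The plan is to unfold the fallback via \cref{def:fallback} as $\seqClause{(\asLongAsClause{\coma}{C})}{\ifThen{\neg(C \land D)}{\comb}}$ and to reduce the whole claim to a single correctness statement about the interrupted controller. Writing $E \defeq G \land (P \lor (C \land D))$, the key intermediate fact I would establish is
\[
(\star)\qquad \hquin{A}{P'}{\asLongAsClause{\coma}{C}}{E}{E}\ \text{is correct.}
\]
Granting $(\star)$, the conclusion follows from the structural rules of \cref{fig:hoare}. Since $E \limply G$, rule~(\limprule) turns $(\star)$ into $\hquin{A}{P'}{\asLongAsClause{\coma}{C}}{E}{G}$. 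For the branch where $\neg(C \land D)$ holds, $E \land \neg(C \land D) \limply P$, so rule~(\limprule) applied to $\hquin{A}{P}{\comb}{Q}{G}$ gives $\hquin{A}{E \land \neg(C \land D)}{\comb}{\top}{G}$; for the other branch $\hquin{A}{E \land (C \land D)}{\skipClause}{\top}{G}$ holds by rule~(\skipcrule), using $E \limply G$. Combining these with rule~(\ifrule) yields $\hquin{A}{E}{\ifThen{\neg(C \land D)}{\comb}}{\top}{G}$, and one application of rule~(\seqrule) then gives the desired Hoare quintuple. I note that the hypothesis $D \limply Q'$ is not actually needed for the stated postcondition $\top$; it is what one would use to strengthen the postcondition to $Q'$ on the branch where $\asLongAsClause{\coma}{C}$ terminates with $C \land D$ true.

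To prove $(\star)$ I would argue directly from the trace semantics, following the pattern of the proof of \cref{lem:aslongas-weak}. The first ingredient is the characterisation of valid traces of $\asLongAsClause{\coma}{C}$ (the auxiliary lemmas stated near \cref{def:trace:cut}): any trace $\tau$ valid for $\asLongAsClause{\coma}{C}$ from a store $\store$ is either a trace valid for $\coma$ from $\store$ along which $C$ holds at every non-final point (a normal, possibly infinite, run of $\coma$), or the truncation $\asLongAsClause{\sigma'}{C} = \restr{\sigma'}{i,t}$ of some $\coma$-trace $\sigma'$ from $\store$ at its first $C$-violation (a finite, interrupted run). In either case $C$ holds at all points of $\tau$ strictly before its last one, and these points, with their values, occur in a trace $\sigma'$ valid for $\coma$ from $\store$. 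Termination of $\asLongAsClause{\coma}{C}$ under $A$ is then immediate from correctness of $\hquin{A}{\top}{\coma}{\top}{\top}$: interrupted and completed runs are finite, and an infinite run is an $A$-satisfying trace of $\coma$, hence incomplete.

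The core of the argument, and the step I expect to be the main obstacle, is to show that $E$---and in particular $G$---holds at every point of $\tau$ at which $A$ has held throughout, hence at $\traceend(\tau)$. Here $\coma$'s own guarantee is only $\top$ and contributes nothing; $G$ must instead be bootstrapped from the interruption-extension hypothesis. Since $\store \vDash P'$, that hypothesis says that along $\sigma'$, whenever $A \land G \land C$ has held at all strictly earlier points and $A$ holds at the current one, then $E$ holds there. Along $\tau$ we already have $A \land C$ at all points before the last (assumption $A$ together with the structural property of $\asLongAsClause{}$), so the only premise still missing for invoking the int-ext at a point is that $G$ held at all strictly earlier points---which is exactly the induction hypothesis. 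One therefore closes the loop by a continuous induction along $\tau$: $G$ holds at the initial store (the int-ext fires there with a vacuous ``strictly earlier'' clause) and propagates through the whole prefix, delivering $E$ at every point and, at the last point, the postcondition. The delicate part is purely this continuous induction---genuinely re-establishing the ``$G \land C$ at all strictly earlier points'' premise at each point, including limit points, along a trace that is only piecewise smooth and made of finitely or countably many ODE segments---and it is handled exactly as in the proof of \cref{lem:aslongas-weak}; the notion of int-ext is designed precisely so that this is the only work left.
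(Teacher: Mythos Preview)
Your reduction of the claim to $(\star)$ and the structural derivation of the conclusion from $(\star)$ via $(\seqrule)$, $(\ifrule)$, $(\skipcrule)$ and $(\limprule)$ are correct.

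The gap is in your argument for $(\star)$. You write that the continuous induction bootstrapping $G$ is ``handled exactly as in the proof of \cref{lem:aslongas-weak}'', but that proof needs no such induction: there, $G$ at every earlier point is supplied \emph{directly} by the safety clause of the hypothesis $\hquin{A \land C}{P}{\coma}{Q}{G}$, and only \emph{afterwards} is the int-ext of $G \land C$ invoked to obtain $D$. In the present lemma the only correctness hypothesis on $\coma$ has guarantee $\top$, so $G$ must come entirely from the int-ext---whose premise already demands $G$ at all strictly earlier points. Your induction principle thus amounts to ``if $G$ holds on every strict initial segment then it holds at the next point''; in continuous time this yields $G$ at the supremum of the good set but gives you no way to step \emph{past} that supremum.

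In fact the circularity is fatal for the statement as literally written. Take $\coma \equiv \dwhileClause{y<2}{\dot{x}=1,\dot{y}=1}$ with no environment variables, $A \equiv C \equiv D \equiv \true$, $P \equiv \false$, $P' \equiv (x=-1 \land y=0)$, and $G \equiv (x \leq 0)$. Then $E = G$, and $G$ is an int-ext of $G \land C = G$ for $\coma$ from $P'$ along $A$: at any $t>1$ the premise ``all strictly earlier points satisfy $G$'' already fails (e.g.\ at $t'=(1+t)/2$), so the int-ext imposes nothing there; at $t \leq 1$ the conclusion $x \leq 0$ holds. All hypotheses of the lemma are met, yet the unique trace has $x(t)=t-1$ on $[0,2]$ and $G$ fails for $t>1$, so neither $(\star)$ nor the lemma's conclusion holds.

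The intended reading---matching the pattern of \cref{lem:switch-weak}, where the assertion inside the int-ext is the \emph{first} program's guarantee, here $\top$---is that $E$ be an int-ext of $C$ (equivalently, that one also has $C \Rightarrow G$, which is the case in every application in the paper, where the switching condition is designed to imply safety). Under that reading your $(\star)$ becomes immediate: along any trace of $\asLongAsClause{\coma}{C}$ every strictly earlier point satisfies $C$, so under $A$ the int-ext fires at each point and yields $E$ outright, with no induction required; your structural argument then finishes the proof.
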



\section{Case Study: Pull Over Scenario}
\label{sec:pullover}



In~\cite{Hasuo22_GARSS}, we consider a complex pull over
scenario with several lanes and $\POV{}$s, depicted in
\cref{fig:pulloverIntro}.
%
This scenario is modeled using $y$ and $v$ to denote the position
and velocity of $\SV$, as well as $y_i$ and $v_i$ to denote those of
$\POV{i}$ (for $i = 1,2,3$).
We also use $l$, which is a half integer, to describe the current lane
of $\SV$: when it is in Lane~$i$, then $l = i$, and when it is
changing lane from Lane~$i$ to Lane~$i+1$, then $l = i+0.5$.
Similarly, the lane in which $\POV{i}$ runs is denoted $l_i$.
We consider that there is a collision between $\SV$ and $\POV{i}$ if
$\abs{l-l_i} \leq 0.5$ and $y - 2\ell \leq y_i \leq y$ (where $\ell$
is the length of a vehicle, and the point of reference of vehicles is
the front for $\SV$ and the rear for $\POV{}$s).
Finally, the minimal and maximal legal speeds are denoted $\vmin$ and
$\vmax$, while the position of the goal on Lane~3 is denoted $\ytgt$.
We also define a framework that
allows us to design a program $\coma_{\GARSS}$ that achieves stopping
at $\ytgt$ on the
shoulder lane while avoiding all collisions with $\POV{}$s, under the
assumption that all $\POV{}$s have constant speeds.
Compared to~\cite{Hasuo22_GARSS}, we add a flag $f$ to
$\coma_{\GARSS}$, which is set to $1$ when the vehicle is going to
turn left and is $0$ otherwise.

In~\cite{Hasuo22_GARSS}, we are interested in proving that we
can achieve the goal of stopping at $\ytgt$ on Lane~3, modeled as
\begin{math}
   \Goal \equiv (l = 3 \land y = \ytgt \land v = 0)
\end{math}.
We make some physical assumptions and constant speed of $\POV{}$s:
\begin{math}
  \Env
  \equiv \textstyle \big( \bigland_{i=1}^3 \vmin \leq v_i \leq \vmax
  \big)
\end{math}
and
\begin{math}
  \Env_a
 \equiv \textstyle \big( \bigland_{i=1}^3 a_i = 0 \big) 
\end{math}.
All the while, we want to show that we respect the RSS distance, which
is both the safety and precondition of $\coma_{\CARSS}$, and is
modeled as
\begin{math}
   \textstyle \Safe \equiv P'
  \equiv \big( \bigland_{i=1}^3 \aheadSL_i \limply y_i - y > \dRSS(v_i,v) \big)
\end{math}
where $\aheadSL_i = (\abs{l-l_i} \leq 0.5 \land y \leq y_i)$.

The framework allows us to
prove~\cite[Example~IV.12]{Hasuo22_GARSS}:
\begin{example}
  \label{ex:pullover:safe-constant}
  The following Hoare quintuple is correct:
  \[
    \hquin{\Env \land \Env_a}{P}{\coma_{\GARSS}}{\Goal}{\Safe}
    \rlap{,}
  \]
  where $P$ is some assertion computed using the framework.
\end{example}

\begin{auxproof}
  \begin{remark}
    This is not exactly what~\cite[Example IV.12]{Hasuo22_GARSS}
    proves, since it uses Hoare quadruples where $\Env$ and $\Safe$ are
    indistinguishable.
    However, the proof of the quintuple is structurally the same. 
  \end{remark}
\end{auxproof}

Here, we want to prove that, even if the $\POV{}$s change their speeds,
we can fallback on collision-avoiding RSS $\coma_{\CARSS}$ to avoid
collision (but losing goal achievement).
The assumption on acceleration $\Env_a$ is thus dropped, but we add
the assumption that $\POV{}$s do not crash into $\SV$ from behind
($\SV$ is not responsible for such collisions anyway), encoded as
\begin{math}
   \textstyle
  \Env' \equiv
  \big( \bigland_{i=1}^3 \behindSL_i \limply y - y_i > \dRSS(v,v_i) + 2\ell \big),
\end{math}
where $\behindSL_i \equiv (\abs{l-l_i} \leq 0.5 \land y_i
< y)$.
\begin{example}[safety of the pullover scenario]
  The following Hoare quintuples are correct:
  \label{ex:pullover-fallback}
  \begin{align}
    \hquin{\Env \land \Env_a}{P}{\newFallbackClauseFull{\coma_{\GARSS}}{\coma_{\CARSS}}{C}{}{1.2}}{\Goal}{\Safe}
    \label{eq:pullover:quintuple-strong} \\
    \hquin{\Env \land \Env'}{P}{\newFallbackClauseFull{\coma_{\GARSS}}{\coma_{\CARSS}}{C}{}{1.2}}{\top}{\Safe'} \rlap{,}
    \label{eq:pullover:quintuple-weak}
  \end{align}
  where $C$ is the switching condition and $\Safe'$ is the
  mild variant of $\Safe$ defined as:
  \begin{gather*}
    C
      \equiv \textstyle \big( P' \land (f = 1 \limply
      P'[l+0.5/l]) \land \bigland_{i=1}^3 a_i = 0 \big)\\
    \Safe'
      \equiv \textstyle \big( \bigland_{i=1}^3 \aheadSL_i \limply y_i -
      y \geq \dRSS(v_i,v) \big) \rlap{.}
  \end{gather*}
  Note that $\Safe'$ does prevent collisions since, if $y_i = y$, then
  $\dRSS(v_i,v) = 0$, which implies that $v < v_i$ or $v = v_i = 0$.
\end{example}

\begin{proof}
  We can prove \eqref{eq:pullover:quintuple-strong} directly using
  \cref{ex:pullover:safe-constant} and \cref{lem:switch-strong}.
  To prove \eqref{eq:pullover:quintuple-weak}, we need an
  int-ext $E$ of $\Safe \land C$ along $\Env \land \Env'$
  and use \cref{lem:switch-weak}.
  Taking $E \equiv \Safe'$
  gives an int-ext as desired.
  The proof that $E$ is as desired is semantic and relies heavily on the
  assumption $\Env'$ to keep other vehicles from creating a collision
  with $\SV$ from behind.
\end{proof}

\section{Case Study: Simplex Architecture}
\label{sec:simplex}

Until now, we studied the fallback
$\newSwitchClause{\coma}{\comb}{\assertc}{\assertd}$, which allows to interrupt
$\coma$ to start $\comb$ when $\assertc$ becomes
false.
This allows to model the interruption of $\AC$ by the decision module
to start $\BC$ when $\AC$ is deemed unsafe.
However, in the simplex architecture, there is also the possibility to
start $\AC$ again when the situation allows it. 
We encapsulate this as a new constructor
 $\newSimplexClause{\coma}{\comb}{\assertc}{\assertd}{\assertc'}$ whose valid traces
 are defined as follows.
\begin{definition}[trace semantics of simplex]
  \label{def:traces:simplex}
  $\store,\trace \vDash \newSimplexClause{\coma}{\comb}{C}{D}{C'}$ iff
  there exists $n \in \Nbar$ such that $\sigma = \concatfinsym_{i=0}^n
  \sigma_i$ with:
  \begin{itemize}
    \item for all $i < n$, $\sigma_i \nvDash C \land D$ if $i$ even,
      and $\sigma_i \nvDash C'$ if $i$ odd,
    \item $\sigma_n \vDash C \land D$ if $n$ even, $\sigma_n \vDash
      C'$ if $n$ odd (if $n < +\infty$),
    \item for all $i \leq n$ ($i < n$ if $n = +\infty$),
      $\traceend(\sigma_{i-1}),
      \sigma_i \vDash \asLongAsClause{\coma}{C}$ if $i$ is even,
      $\traceend(\sigma_{i-1}), \sigma_i \vDash
      \asLongAsClause{\comb}{C'}$ if $i$ is odd.
  \end{itemize}
\end{definition}

\begin{remark}\label{rem:simplexByAsLongAs}
As for the fallback, we could have defined the simplex
$\newSimplexClause{\coma}{\comb}{\assertc}{\assertd}{\assertc'}$
using the existing constructors. 
\end{remark}

\subsection{Partial Correctness}

Since the semantics of the simplex is complex, we divide the
derivation of Hoare quintuples in two steps.
We first focus on partial correctness of the simplex, which is
similar to \cref{def:hoare:quintuples}:
\begin{definition}
  A Hoare quintuple is \emph{partially correct} if it satisfies
  \cref{def:hoare:quintuples}, except possibly for
  \emph{termination}.
\end{definition}
%

The partial correctness of Hoare quintuples for the simplex architecture 
can be deduced from the (partial) correctness of its components:
\begin{lemma}
  \label{lem:partial}
  If all the hypotheses of \cref{lem:switch-weak} (partially)
  hold with $A' = A$, $Q' = Q$, and $G' = G$, and
  there exists $E'$ an int-ext of $C'$ for $\comb$
  from $P'$ along $A$ such that $E' \land \neg C' \limply P \land
  C$,
  then $\hquin{A}{P}{\newSimplexClause{\coma}{\comb}{\assertc}{\assertd}{\assertc'}}{Q}{G}$
  is partially correct.
\end{lemma}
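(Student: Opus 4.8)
The plan is to reduce to \cref{lem:aslongas-weak} applied to each controller, and then to propagate the resulting quintuples along the alternating segment decomposition of \cref{def:traces:simplex} by induction on the number of segments, essentially iterating the argument behind \cref{lem:switch-weak}. Because only \emph{partial} correctness is claimed, no termination reasoning is required, which is essential: an infinite alternation of $\coma$- and $\comb$-segments is a legitimate simplex behaviour, so the conclusion genuinely cannot be made total.

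First I would record correct (partial) quintuples for the two guarded sub-programs. Since $A \land C \limply A$, rule $(\limprule)$ turns $\hquin{A}{P}{\coma}{Q}{G}$ into $\hquin{A \land C}{P}{\coma}{Q}{G}$, and, as $E$ is an int-ext of $G \land C$ for $\coma$ from $P$ along $A$, the (partial version of) \cref{lem:aslongas-weak} yields $\hquin{A}{P}{\asLongAsClause{\coma}{C}}{(Q \land C) \lor (E \land \neg C)}{E}$. For $\comb$ I first observe that any int-ext of $C'$ is automatically an int-ext of $G \land C'$ (the latter has the stronger defining hypothesis), so $E'$ qualifies, and \cref{lem:aslongas-weak} applied to $\hquin{A \land C'}{P'}{\comb}{Q}{G}$ gives $\hquin{A}{P'}{\asLongAsClause{\comb}{C'}}{(Q \land C') \lor (E' \land \neg C')}{E'}$. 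From \cref{lem:switch-weak}'s hypotheses I keep $E \limply P' \land G$, $Q \limply G$, and $E' \land \neg C' \limply P \land C$; I also use the structural fact (implicit already in \cref{lem:aslongas-weak}) that every trace of $\asLongAsClause{\coma}{C}$ (resp.\ $\asLongAsClause{\comb}{C'}$) from a store is a prefix of a genuine trace of $\coma$ (resp.\ $\comb$) from that store, truncated at the first violation of the guard if there is one.

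Next, fix $\store \vDash P$ and a valid trace $\sigma$ for $\newSimplexClause{\coma}{\comb}{C}{D}{C'}$ from $\store$; write $\sigma = \concatfinsym_{i=0}^{n}\sigma_i$ with $\rho_0 = \store$, $\rho_{i+1} = \traceend(\sigma_i)$ as in \cref{def:traces:simplex}, the even segments valid for $\asLongAsClause{\coma}{C}$ and the odd ones for $\asLongAsClause{\comb}{C'}$. I would prove by induction on $i$ the invariant: (a) $\rho_i \vDash P$ for $i$ even and $\rho_i \vDash P'$ for $i$ odd; and (b) if $A$ holds everywhere on $\concatfinsym_{j=0}^{i}\sigma_j$ then so does $G$. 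Part (b) for $\sigma_i$ is immediate from (a) and the safety clause of the original component quintuple, since $\sigma_i$ is a prefix of a $\coma$- or $\comb$-trace from $\rho_i$. For (a) at $i+1$, a non-final $\sigma_i$ is finite and meets the switching condition of \cref{def:traces:simplex}; as the guard of a $\asLongAsClause{}$-program can only fail at the endpoint, either $\sigma_i$ was interrupted, in which case the postcondition of its guarded quintuple places $\traceend(\sigma_i)$ in $E \land \neg C$ (even $i$, whence $\rho_{i+1} \vDash P'$ via $E \limply P'$) or in $E' \land \neg C'$ (odd $i$, whence $\rho_{i+1} \vDash P \land C$ via $E' \land \neg C' \limply P \land C$); or (only possible for even $i$) $\coma$ ran to completion with $C$ holding throughout but $D$ violated, in which case $\traceend(\sigma_i) \vDash Q \land C$ and one re-derives $\rho_{i+1} \vDash P'$ from the int-ext $E$ evaluated at the endpoint, exactly as in the corresponding case of \cref{lem:switch-weak}.

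Finally, safety of the simplex quintuple is part (b) applied to the segment containing the queried point; and the postcondition holds because, if $\sigma$ is finite, its last segment $\sigma_n$ has the stopping condition of \cref{def:traces:simplex} true throughout, hence is a \emph{complete} trace of $\coma$ (resp.\ $\comb$) from $\rho_n \vDash P$ (resp.\ $P'$), so the postcondition clause of $\hquin{A}{P}{\coma}{Q}{G}$ (resp.\ $\hquin{A}{P'}{\comb}{Q}{G}$) gives $\traceend(\sigma) \vDash Q$. The step I expect to be the main obstacle is the boundary bookkeeping: keeping parities and index ranges in $\concatfinsym_{i=0}^{n}\sigma_i$ straight (including $n = +\infty$, where (a)--(b) still reach any queried point by a finite induction), correctly matching traces of $\asLongAsClause{\coma}{C}$ and $\asLongAsClause{\comb}{C'}$ with genuine $\coma$- and $\comb$-traces, and checking that in each ``switch vs.\ stay'' case of \cref{def:traces:simplex} the exiting state lands in the precondition required next --- the place where the hypotheses $E \limply P'$, $E' \land \neg C' \limply P \land C$, and the int-ext property of $E$ do the real work, mirroring the argument already carried out for \cref{lem:switch-weak}.
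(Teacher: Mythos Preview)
Your proposal is correct and follows the same line the paper intends (the paper gives no detailed proof of this lemma in the published text; its commented-out sketch argues by exactly the same induction over the segment decomposition of \cref{def:traces:simplex}, maintaining the right precondition at each segment boundary, propagating $G$ across segments, and reading off $Q$ from the final segment).

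One small sharpening is worth making: the ``structural fact'' you invoke---that every trace of $\asLongAsClause{\comb}{C'}$ is a prefix of a genuine $\comb$-trace---is not literally true at discrete steps where the guard $C'$ is violated (e.g.\ for an assignment $\assignClause{x}{e}$ with $\rho \nvDash C'$, the $\asLongAsClause{}{}$-trace is $\tracezero{\rho}$, which is not a prefix of the assignment trace $\tracezero{\update{\rho}{x}{\sem{e}{\rho}}}$). You can sidestep the issue entirely by observing that $G$ is itself an int-ext of $G \land C'$ for $\comb$ from $P'$ along $A$: this is precisely the safety clause of $\hquin{A}{P'}{\comb}{Q}{G}$. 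Hence $G \land E'$ is also an int-ext of $G \land C'$, and applying \cref{lem:aslongas-weak} with $G \land E'$ in place of $E'$ yields $\hquin{A}{P'}{\asLongAsClause{\comb}{C'}}{(Q \land C') \lor (G \land E' \land \neg C')}{G \land E'}$. The guarantee $G \land E'$ then gives you $G$ on odd segments directly, without the structural fact, and the postcondition still yields $E' \land \neg C' \limply P \land C$ at the switch point.
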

Here, we say that a hypothesis of \cref{lem:switch-weak} holds
partially if it is a Hoare quintuple that is partially correct, or it
is a regular hypothesis and it simply holds.


\todoproof{
\begin{lemma}
  \label{lem:partial}
  If 
  \begin{itemize}
    \item $\hquin{A\land\closure{C}}{P \lor (\lnot\assertd\land\closure{\assertd})}{\coma}{C \limply Q}{G}$
      is partially correct,
    \item $\hquin{A\land\closure{D}}{\lnot\assertc\land\closure{\assertc}}{\comb}{D \limply Q}{G}$
      is partially correct,
    \item $A \land \lnot \assertd \land \closure{\assertd} \limply \assertc$,
      $A \land \lnot \assertc \land \closure{\assertc} \limply \assertd$, and
      $P \limply \assertc$,
  \end{itemize}
  then $\hquin{A}{P}{\simplexClause{\coma}{\comb}{\assertc}{\assertd}}{Q}{G}$
  is partially correct.
\end{lemma}

\begin{proof}
  Let $\sigma_1, \ldots, \sigma_n, \ldots$ be a trace valid for 
  $\simplexClause{\coma}{\comb}{\assertc}{\assertd}$ from a store $\store \vDash P$, 
  as above and assume that for all $n < m$, $\sigma_n \vDash A$ and
  $\restr{\sigma_m}{i,t} \vDash A$.
  By induction and using lemmas about as-long-as program, we can prove that:
  \begin{itemize}
    \item if $2n < m$, $\sigma_{2n} \vDash \asserta\land\closure{\assertd}\land G$
      and $\traceend{(\sigma_{2n})} \vDash \lnot\assertd\land\closure{\assertd}\land\assertc$,
    \item if $2n+1 < m$,  $\sigma_{2n+1} \vDash \asserta\land\closure{\assertc}\land G$, 
      and $\traceend{(\sigma_{2n+1})} \vDash \lnot\assertc\land\closure{\assertc}\land\assertd$,
    \item if $m = 2n$, $\restr{\sigma_m}{i,t} \vDash \asserta\land\closure{\assertd}\land G$, and
    \item if $m = 2n+1$, $\restr{\sigma_m}{i,t} \vDash \asserta\land\closure{\assertd}\land G$.
  \end{itemize}
  This implies the guarantee. Furthermore, if $\sigma$ is finite, let $m$ be the last index of 
  the sequence, $\sigma_m = \bigl((t_j,h_j) \bigr)_{j < i+1}$, $t = t_i$.
  By definition of traces for the simplex, we then know that:
  \begin{itemize}
    \item either $m = 2n$, and $\sigma_m \vDash \assertd$ and 
      $\traceend{(\sigma_{m-1})}, \sigma_m \vDash \comb$,
    \item or $m = 2n+1$, and $\sigma_m \vDash \assertc$ and 
      $\traceend{(\sigma_{m-1})}, \sigma_m \vDash \coma$.
  \end{itemize}
  In either case, we know that $\traceend{(\sigma_m)} \vDash Q$, which is the postcondition.
\end{proof}
}

\subsection{Termination}

The main challenge to prove the total correctness of the simplex is its termination.
Indeed, one possible way for the simplex to not terminate is when 
the system oscillates between $\coma$ and $\comb$ infinitely often.
In this section, we show two ways to design a simplex ensuring its 
termination.
Assume given programs $\coma$ and $\comb$ and formulas $\assertc$,
$\assertd$, and $\assertc'$ on which we build our simplex.

\subsubsection{Counters}

Assume given a fresh variable $c$ and a natural number $N$.
The idea is to use $c$ as a counter to count the number of switches.
To ensure termination, we require the switch to be done at most $N$ times as follows.
Formally, we transform the programs and assertions as follows:
$\widetilde{\coma} \equiv \coma$,
$\widetilde{\comb} \equiv (\assignClause{c}{c+1} ; \comb)$,
$\widetilde{\assertc} \equiv \assertc$,
$\widetilde{\assertd} \equiv \assertd$, and 
$\widetilde{\assertc'} \equiv (c \leq N \limply \assertc')$.

\begin{lemma}
  \label{lem:total:counter}
  If
  $\hquin{A}{P}{\newSwitchClause{\coma}{\comb}{\assertc}{\assertd}}{Q}{G}$ is 
  correct,
  then $\hquin{A}{P}{\newSimplexClause{\widetilde{\coma}}{\widetilde{\comb}}{\widetilde{\assertc}}{\widetilde{\assertd}}{\widetilde{\assertc'}}}{Q}{G}$
  is correct.
\end{lemma}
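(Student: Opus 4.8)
The plan is to reduce the total correctness of the counter-instrumented simplex to the correctness of the fallback $\newSwitchClause{\coma}{\comb}{\assertc}{\assertd}$ by analyzing the shape of valid traces of $\newSimplexClause{\widetilde{\coma}}{\widetilde{\comb}}{\widetilde{\assertc}}{\widetilde{\assertd}}{\widetilde{\assertc'}}$ according to \cref{def:traces:simplex}. First I would fix a store $\store \vDash P$ and a valid trace $\sigma = \concatfinsym_{i=0}^n \sigma_i$ of the instrumented simplex, and observe that the only change from $\newSimplexClause{\coma}{\comb}{\assertc}{\assertd}{\assertc'}$ is that each odd segment $\sigma_i$ (a run of $\asLongAsClause{\widetilde\comb}{\widetilde{\assertc'}}$) begins with the assignment $\assignClause{c}{c+1}$ and then runs $\comb$ as long as $c \leq N \limply \assertc'$. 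Since $c$ is fresh (it appears in no program, in no assertion $A,P,Q,G,\assertc,\assertd,\assertc'$, and is never modified by $\coma$ or $\comb$), after the $k$-th pass through $\widetilde\comb$ the value of $c$ is $c_0 + k$; hence once $k$ exceeds $N - c_0$, the guard $c \leq N \limply \assertc'$ becomes identically $\true$ on that segment, so $\asLongAsClause{\widetilde\comb}{\widetilde{\assertc'}}$ never gets interrupted for the $\assertc'$-reason and behaves exactly like $\widetilde\comb$, i.e.\ like $\comb$ (modulo the irrelevant update of $c$).

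The key structural claim is then: every valid trace of the instrumented simplex has the form $\sigma_0 \cdot \sigma_1 \cdots \sigma_{2m}$ (up to possibly an infinite or incomplete final segment), with $2m \le 2(N - \lceil c_0\rceil) + O(1)$, i.e.\ the number of alternations is bounded by a constant depending only on $N$ and the initial value of $c$. Concretely, after at most $N$ returns to $\comb$, the $\comb$-phase can no longer be interrupted, so the simplex degenerates into a prefix that looks like finitely many copies of the fallback pattern followed by one final uninterrupted $\comb$-run. I would make this precise by induction on the segment index, using the as-long-as lemmas implicit in \cref{lem:aslongas:strong}--\cref{lem:aslongas-weak} to identify a maximal-length prefix $\sigma_0 \cdots \sigma_{2j}$ with the property that it is a valid trace of $\newSwitchClause{\coma}{\comb}{\assertc}{\assertd}$ (here the even segments are $\asLongAsClause{\coma}{C}$-runs and the odd ones are the subsequent $\comb$-runs, exactly matching \cref{def:fallback}), possibly extended by one further uninterrupted $\comb$-phase. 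Termination, the postcondition, and the safety/guarantee then transfer: from the correctness of $\newSwitchClause{\coma}{\comb}{\assertc}{\assertd}$ we get finiteness (or incompleteness) of each such fallback-shaped prefix under $A$, $Q$ at its end, and $G$ throughout; and the final uninterrupted $\comb$-run inherits its termination, postcondition and guarantee from the same correctness hypothesis (since an uninterrupted run of $\asLongAsClause{\comb}{C'}$ is just a run of $\comb$, which is the second component whose correctness is packed into the correctness of the fallback via \cref{lem:switch-strong} / \cref{lem:switch-weak}).

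The main obstacle I expect is the bookkeeping needed to argue that ``the odd segments eventually stop being interrupted'': one must check that \emph{neither} of the two ways a $\widetilde\comb$-segment can end in \cref{def:traces:simplex} (completion of $\comb$ versus violation of $\widetilde{\assertc'}$) can recur more than $N$ times — the first is handled by the postcondition $Q$ of the fallback ensuring $\comb$ actually finishes, the second by the counter guard going vacuous — and to reconcile the trace decomposition of the simplex (which interleaves $\asLongAsClause{\coma}{C}$ and $\asLongAsClause{\widetilde\comb}{\widetilde{\assertc'}}$) with the decomposition of the fallback (which is $\asLongAsClause{\coma}{C}$ followed by a conditional $\comb$). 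A secondary subtlety is that dropping the guard on $c$ only makes $\asLongAsClause{\widetilde\comb}{\widetilde{\assertc'}}$ coincide with $\widetilde\comb$ \emph{as traces}, which requires checking that the as-long-as desugaring in \cref{def:asLongAs} collapses to the identity when its guard is $\true$ — a routine but necessary lemma. Once these are in place the transfer of all three correctness conditions is immediate.
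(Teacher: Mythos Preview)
Your termination argument is correct and matches the paper: the fresh counter $c$ is incremented at the start of each $\widetilde\comb$-segment, and once $c > N$ the guard $\widetilde{C'} \equiv (c \leq N \limply C')$ is vacuously true, so by \cref{def:traces:simplex} no odd segment $\sigma_i$ with $i<n$ can satisfy $\sigma_i \nvDash \widetilde{C'}$ after that point and the number of alternations is bounded.

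The partial-correctness part, however, has a real gap. You propose to view the simplex trace as ``finitely many copies of the fallback pattern'' and to transfer $G$ and $Q$ from the fallback hypothesis to each copy. Two things go wrong. First, a valid trace of $\newSwitchClause{\coma}{\comb}{C}{D}$ consists of \emph{one} $\asLongAsClause{\coma}{C}$-segment followed by \emph{at most one} full $\comb$-run (\cref{def:fallback}); a prefix $\sigma_0\cdots\sigma_{2j}$ with $j\ge 1$ is never itself a fallback trace, so your ``maximal-length prefix \dots\ is a valid trace of the fallback'' claim is simply false. Second --- and this is the essential obstruction --- even if you treat each pair $\sigma_{2k}\cdot\sigma_{2k+1}$ as a separate fallback-shaped piece, applying the fallback hypothesis to it requires its starting state $\traceend(\sigma_{2k-1})$ to satisfy the precondition $P$. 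But $\traceend(\sigma_{2k-1})$ is an arbitrary intermediate state of an interrupted $\comb$-run, and correctness of the fallback says nothing about such states. Without $P$ there you cannot conclude that the next $\asLongAsClause{\coma}{C}$-segment maintains $G$, nor that the eventual end state satisfies $Q$. (Relatedly, your claim of ``$Q$ at its end'' for each interrupted pair is also wrong: an interrupted odd segment is only a prefix of a $\comb$-run, so the pair is a prefix of a fallback trace and the postcondition clause of \cref{def:hoare:quintuples} does not apply.)

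The paper does not attempt this direct decomposition. Its sketched argument first obtains partial correctness of the (uninstrumented) simplex via \cref{lem:partial}, observes that instrumenting with the fresh variable $c$ and weakening $C'$ to $\widetilde{C'}$ preserves partial correctness, and then handles termination by the counter bound you identified. But \cref{lem:partial} carries hypotheses strictly beyond mere correctness of the fallback --- in particular an interruption-extension $E'$ of $C'$ for $\comb$ along $A$ with $E' \land \neg C' \limply P \land C$, which is exactly the missing ingredient that re-establishes $P$ (and $C$) when control passes from $\comb$ back to $\coma$. Your plan needs either that additional hypothesis or a genuinely different idea to close the gap; the ``bookkeeping'' obstacles you list are not where the difficulty lies.
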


\subsubsection{Timers}


Given $t$ and $\coma$, we define $\coma_t$ as
the program obtained from $\coma$ by replacing all the programs
$\dwhileClause{A}{\odeClause{\vars}{\funs}}$ by 
$\dwhileClause{A}{\odeClause{\vars}{\funs}, \dot{t} = 1}$.
We then transform the programs and assertions into 
$\widetilde{\coma} \equiv \coma_t$,
$\widetilde{\comb} \equiv (\assignClause{t_0}{t} ; \comb_t)$,
$\widetilde{\assertc} \equiv \assertc$,
$\widetilde{\assertd} \equiv \assertd$, and 
$\widetilde{\assertc'} \equiv ((t - t_0 \geq \epsilon \land t \leq T)
\limply \assertc')$,
where $t_0$ is a free variable, such that the value of $t - t_0$
describes the time $\comb$ has been executing for since the last
switch.


\begin{lemma}
  \label{lem:total:timer}
  If all the hypotheses of \cref{lem:partial} hold, then
  $\hquin{A}{P}{\newSimplexClause{\widetilde{\coma}}{\widetilde{\comb}}{\widetilde{\assertc}}{\widetilde{\assertd}}{\widetilde{\assertc'}}}{Q}{G}$
  is correct.
\end{lemma}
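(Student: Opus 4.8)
The plan is to separate the claim into \emph{partial correctness}, which is already delivered by \cref{lem:partial}, and \emph{termination}, which is precisely what the timer transformation is designed to force, and to spend essentially all of the effort on the latter. The first observation is that the added variables are \emph{inert}: the timer $t$ is a fresh physical variable occurring only as the extra equation $\dot t = 1$ inside the $\dwhileKeyword$s of $\widetilde{\coma} = \coma_t$ and $\widetilde{\comb} = \comb_t$, the marker $t_0$ is a fresh cyber variable assigned only once (at entry to $\widetilde{\comb}$), and neither occurs in $A$, $P$, $Q$, $G$, $C$, $D$ nor in $\coma,\comb$. Hence every Hoare quintuple and every interruption-extension assumed of $\coma,\comb$ lifts verbatim to $\coma_t,\comb_t$ — the side conditions $\Var(\cdot)\cap\Variables_\Environment=\emptyset$ of $(\dwhilerule)$ are untouched since $t$ is physical, not environmental — and along any valid trace $t$ is non-decreasing and advances at unit rate inside each continuous segment, so $t-t_0$ measures the continuous time elapsed since the last switch into $\widetilde{\comb}$.

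Next, I would re-run the argument of \cref{lem:partial} on $\widetilde{\coma},\widetilde{\comb},\widetilde{C}\equiv C,\widetilde{D}\equiv D,\widetilde{C'}\equiv\bigl((t-t_0\geq\epsilon\land t\leq T)\limply C'\bigr)$ to obtain partial correctness of $\hquin{A}{P}{\newSimplexClause{\widetilde{\coma}}{\widetilde{\comb}}{\widetilde{C}}{\widetilde{D}}{\widetilde{C'}}}{Q}{G}$. By inertness, the per-block guarantee and postcondition bookkeeping is identical to that of \cref{lem:partial}; the only change is that odd blocks now end when $\widetilde{C'}$ is violated rather than when $C'$ is. This last point is the one nontrivial adaptation, and I expect it to be the main obstacle: since $\widetilde{C'}$ is strictly weaker than $C'$ (it holds for free while $t-t_0<\epsilon$), one must check that the int-ext hypothesis of \cref{lem:partial} still does its job under the relaxed guard, i.e.\ that at the instant $\widetilde{C'}$ first fails one still has $P\land C$. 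Because $\neg\widetilde{C'}$ entails $t-t_0\geq\epsilon\land t\leq T\land\neg C'$, this reduces to the original implication $E'\land\neg C'\limply P\land C$ \emph{provided} the int-ext $E'$ is maintained by $\comb$ throughout the extra $\epsilon$-delay; this holds in practice, where $E'$ is a genuine invariant of the baseline controller, and I would either argue it directly from how $E'$ is produced or, more conservatively, strengthen $E'$ to an int-ext of $\widetilde{C'}$ by adjoining the set of states $\widetilde{\comb}$ can reach from $P'$ within timer-time $\epsilon$.

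Finally, termination. Fix a valid trace $\sigma=\concatfinsym_{i=0}^n\sigma_i$ from $\store\vDash P$ with $\sigma\vDash A$, decomposed as in \cref{def:traces:simplex}. Any odd block $\sigma_{2k+1}$ that is not the last one must end because $\widetilde{C'}$ fails at its endpoint, so there $t-t_0\geq\epsilon$ and $t\leq T$; since $t_0$ was frozen to the value of $t$ at $\traceend(\sigma_{2k})$ and $t$ never decreases, in particular not across the intervening even block $\sigma_{2k+2}$, the value of $t$ at the start of $\sigma_{2k+3}$ exceeds its value at the start of $\sigma_{2k+1}$ by at least $\epsilon$. Inductively, the $m$-th non-final odd block starts with $t$ at least $(m-1)\epsilon$ above its initial value yet still requires $t\leq T$ at its end, which bounds the number of switches by $\lceil T/\epsilon\rceil+1$ once $t$ starts non-negative; hence $n<+\infty$. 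Each individual block is moreover finite or incomplete, since $\asLongAsClause{\widetilde{\coma}}{\widetilde{C}}$ and $\asLongAsClause{\widetilde{\comb}}{\widetilde{C'}}$ are totally correct under $A$ by \cref{lem:aslongas:strong,lem:aslongas-weak} applied to the lifted component quintuples, along the block preconditions that the proof of \cref{lem:partial} threads through. Combining the two, $\sigma$ itself is finite or incomplete, which together with the partial-correctness clauses established above yields the total correctness of the simplex Hoare quintuple.
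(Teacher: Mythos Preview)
Your approach matches the paper's: partial correctness is lifted from \cref{lem:partial} via inertness of the fresh timer variables, and termination follows because every non-final odd block ends with $t-t_0\geq\epsilon$ and $t\leq T$, so the monotone timer bounds the number of switches by roughly $T/\epsilon$. Your explicit flagging of the int-ext transfer for the weakened $\widetilde{C'}$ is a subtlety the paper's sketch glosses over; your suggested remedies are adequate, and in the paper's applications the issue is moot since $E'\equiv\top$ is used.
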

\todoproof{
  By Lemma~\ref{lem:partial}, we know that 
  $\hquin{A}{P}{\simplexClause{\coma}{\comb}{\assertc}{\assertd}}{Q}{G}$
  is partially correct.
  Since $\assertc \limply\assertc'$ and $\assertd \limply\assertd'$ and $c$ and $c_0$ are 
  fresh variables, we can deduce that 
  $\hquin{A}{P\land c = 0}{\simplexClause{\coma'}{\comb'}{\assertc'}{\assertd'}}{Q}{G}$
  is also partially correct.
  It remains to prove the termination.
  We know that $\lnot\assertc'\limply c \leq N$ and $\lnot\assertd'\limply c \leq N$.
  Furthermore:
  \begin{itemize}
    \item for counters, after each switch, $c$ is increased by $1$ and $c$ is forced to be $0$ 
      initially by the precondition, so there cannot be more than $N$ switches,
    \item for timers, $\lnot\assertc'$ and $\lnot\assertd'$ both imply $c - c_0 \geq \varepsilon$, 
      which means that at every switch from $\comb$ to $\coma$, $c$ is increased of 
      at least $\varepsilon$. Consequently, there cannot be more than 
      $2N/\varepsilon$ switches.
  \end{itemize}
  Consequently, a valid trace $\sigma$ for $\simplexClause{\coma'}{\comb'}{\assertc'}{\assertd'}$
  from $\store \vDash P\land c = 0$ is of the form 
  $\sigma_1, \ldots, \sigma_m$. It remains to prove that $\sigma_m$ is finite, which is true
  by the validity assumptions on $\coma$ and $\comb$. 
}

\section{Case Study: Layered Simplexes}
\label{sec:caseStudyLayered}



In this section, we show how to combine more than two controllers into
a single simplex architecture to achieve different guarantees under
different circumstances.

Precisely, we use the example of an $\AC$ with two $\BC$s: a
first one running $\coma_{\GARSS}$ and a second one running
$\coma_{\CARSS}$.
We want them to start running when some slightly conservative
preconditions become violated (so that we can guarantee that they
achieve their goal).
We denote by $C$ conservative precondition for $\coma_{\CARSS}$ and by
$D$ that of $\coma_{\GARSS}$.
The architecture is described in \cref{fig:introLayered}, and modeled as
\[
  \comalayer \equiv
  \newSimplexClauseFull{\coma_{\AC}}{\coma_{\BC}}{D}{\Goal}{D'}{1.2},
  \quad
  \coma_{\BC} \equiv
  (\newSimplexClauseFull{\coma_{\GARSS}}{\coma_{\CARSS}}{C}{}{C'}{1.5})
  \rlap{.}
\]


Our goal for the rest of this section is to design $C'$ and $D'$ such
that $\comalayer$ satisfies some guarantees derived from those of
$\GARSS$ and $\CARSS$.
We fix two positive reals $\epsilon$ and $\epsilon'$ such that
$\epsilon < \epsilon'$, which we will use as margins for conservative
preconditions.
We generalize the $\CARSS$ precondition $P'$ with margins as follows:
\begin{math}
   \textstyle
  P'(\epsilon) \equiv \bigland_{i=1}^3 (\aheadSL_i \limply y_i - y
  > \dRSS(v_i,v) + \epsilon) 
\end{math}.

The switching conditions for
$\coma_{\BC}$ are:
\small
\[\begin{array}{c}
  C
  \textstyle \equiv \big( P'(\epsilon) \land (f=1 \limply P'(0)[l+0.5/l]) \land \bigland_{i=1}^3 a_i = 0 \big) \\
  C'
  \textstyle \equiv \neg \big( P'(\epsilon') \land (f=1 \limply P'(\epsilon)[l+0.5/l]) \land\bigland_{i=1}^3 a_i = 0 \land P \big)
\end{array}\]
\normalsize

\begin{example}[safety of $\coma_{\BC}$]
  By the same reasoning as in \cref{ex:pullover-fallback} (with
  margins), we get that
  \eqref{eq:pullover:quintuple-strong} and
  \eqref{eq:pullover:quintuple-weak} are correct (for this more
  conservative $C$).
 By \cref{lem:partial} (with $\top$ as the
  int-ext), we get that $\coma_{\BC}$
  satisfies the same quintuples.
\end{example}


Similarly, the $\GARSS$ precondition $P$ is a Boolean
combination of inequalities $f(\overrightarrow{x}) >
g(\overrightarrow{x})$.
We generalize it to $P(\epsilon)$, where inequalities have been
strengthened into $f(\overrightarrow{x}) > g(\overrightarrow{x}) +
\epsilon$.
Note that $P$ is derived in such a way that it respects the RSS
distance, so in particular it implies $P'$.
The switching condition for
$\simplexClause{\coma_{\AC}}{\coma_{\BC}}{D}{D'}$ are:
\begin{displaymath}
\begin{array}{l}
   D \equiv \big( P(\epsilon) \land P(0)[l+0.5/l] \land P(0)[l-0.5/l] \big) \\
  D'
  \equiv \neg \big( D \land P(\epsilon') \land P(\epsilon)[l+0.5/l] \land P(\epsilon)[l-0.5/l] \big)
    \rlap{.}
\end{array}
\end{displaymath}

\begin{example}[safety of $\comalayer$]
  Let us assume that, in $\coma_{\AC}$, assignments to $l$ are only of
  the form $\assignClause{l}{l+0.5}$ or $\assignClause{l}{l-0.5}$,
  which models the fact that $\SV$ cannot ``skip'' lanes.
  By \cref{lem:ac-bc} (for partial correctness), we get that
  \begin{gather*}
\begin{array}{l}
     \hquin{\Env \land \Env_a}{D}{\switchClause{\coma_{\AC}}{\coma_{\BC}}{D}}{\Goal}{\Safe}\\
    \hquin{\Env \land \Env'}{D}{\switchClause{\coma_{\AC}}{\coma_{\BC}}{D}}{\top}{\Safe'}
\end{array}  
\end{gather*}
  are partially correct.
  By \cref{lem:partial} (again with $\top$ as the
  int-ext), the Hoare quintuples 
    $\hquin{\Env \land \Env_a}{D}{\comalayer}{\Goal}{\Safe}$ and 
    $\hquin{\Env \land \Env'}{D}{\comalayer}{\top}{\Safe'}$
  are
  partially correct:
\end{example}


\section{Experiments}
\label{sec:exp}

We conducted experiments to evaluate the practical values of the
proposed framework.
The experiments used the setting of \cref{sec:caseStudyLayered}, where
1) the driving scenario is the pull over one (\cref{fig:pulloverIntro}),
2) \SV{} is equipped with the layered simplexes in which CA-RSS
safeguards GA-RSS, and 3) the $\POV{}$s may change speed.
We posed the following research questions.



\textbf{RQ1 (weak guarantee).} 
Do the layered simplexes successfully ensure safety, 
even if \POV{}s change speed? 


This is where the CA-RSS component should act to avoid collision. Since the GA-RSS assumption is violated, we should not expect that the GA-RSS goal (namely reaching the designated stopping position) is ensured. Safety is mathematically established in~\cref{sec:caseStudyLayered}, but we want to experimentally confirm.

\textbf{RQ2 (strong guarantee).} 
Do the layered simplexes successfully ensure goal achievement (reaching the stopping position on the shoulder), 
in case \POV{}s \emph{do not} change speed?


The GA-RSS rule for this scenario is designed to ensure this~\cite{Hasuo22_GARSS}, and its assumption is satisfied in this setting. Therefore we want to confirm---although it is mathematically established in~\cref{sec:caseStudyLayered}---that the additional CA-RSS simplex does not tamper the operation of the GA-RSS simplex. 

\textbf{RQ3 (best-effort goal achievement).} 
 Can \SV{} reach the  stopping position, even when \POV{}s change their speed?

 Our layered simplex architecture tries to give the control back from CA-RSS to GA-RSS, and then to \AC{}, when possible. This is  in order to minimize the interference of  more restrictive controllers. We would like to see that this design indeed results in best-effort goal achievement of GA-RSS.


As \AC{} of our controller, 
we used a prototype
planner (a research prototype  provided by Mazda Motor Corporation; it is  unrelated to any of its products)  based on the algorithm in~\cite{mcnaughton2011motion}. 
\AC{} is a sampling-based controller that, at each time step, generates a large
number of candidate short-term paths and chooses the best in terms of a predetermined cost
function. 

\begin{auxproof}
 Each \POV{} having a constant speed is a central component of our GA-RSS environmental assumption. In the experiments, we modeled this condition using a \emph{brake light}: there is a global  variable $\mathtt{brakeLight}_{i}$ for each $\POV{i}$ in the simulator; $\POV{i}$ brakes if $\mathtt{brakeLight}_{i}$ is $\true$ and it maintains its speed otherwise (for simplicity we assume \POV{}s do not accelerate); and \SV{} deems the constant speed assumption is no longer true once $\mathtt{brakeLight}_{i}$ is $\true$.
\end{auxproof}

We ran simulations under  settings that differ in 1) the stopping position $\ytgt$, 2) the initial positions and velocities of \SV{} and \POV{}s,
and 3) whether and when \POV{}s brake. 

\begin{wrapfigure}[15]{r}{0pt}
\includegraphics[width=.13\textwidth]{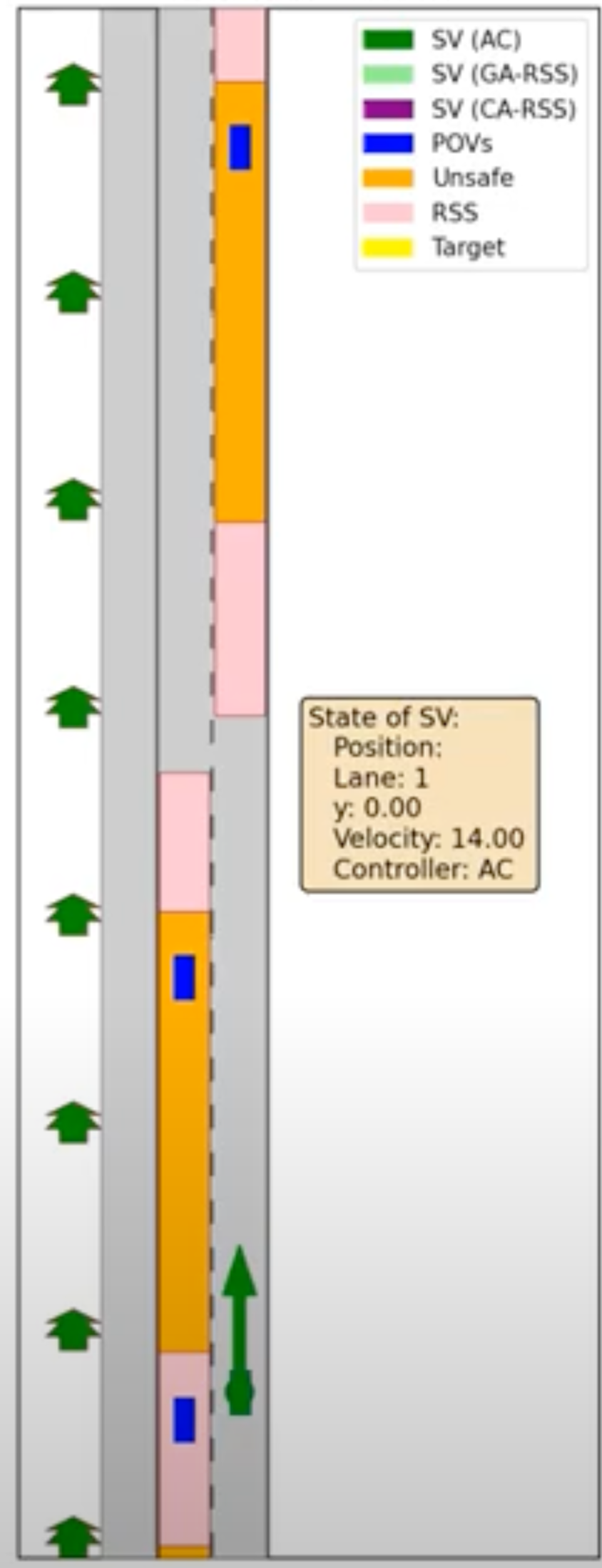}
\end{wrapfigure}
These simulations answered RQ1 and RQ2  positively: there were no collisions; and \SV{} reached the stopping position on the shoulder in all those settings where \POV{}s do not brake.

To address RQ3, we exhibit two notable instances, in which 1) GA-RSS \BC{} is interrupted  due to \POV{2} braking, 2) GA-RSS \BC{} regains the control after \POV{2} stops braking, and 3) in the end, GA-RSS \BC{} successfully makes \SV{} reach the stopping position. 
These instances answer RQ3 positively: our layered simplexes switch back to less restrictive controllers when possible; this allows ADVs to pursue best-effort goal achievement while ensuring safety. 

In the first notable instance (the video is at \url{https://bit.ly/3HrKg3o}),  
vehicles are initially positioned as shown on the right, with 
$
y_{\POV{1}}=-2,
y_{\POV{2}}=30,
y_{\POV{3}}=90,
y_{\SV{}}=0,
\ytgt = 180
$ [\SI{}{\metre}]; the initial velocity is  \SI{10}{\meter\per\second} for all \POV{}s and \SI{14}{\meter\per\second} for \SV{}. We made \POV{2} brake  from \SI{1}{\second} to \SI{1.5}{\second}, at the rate \SI{-3}{\meter\per\second^2}.

In the simulation, \AC{} was initially in control, but GA-RSS \BC{} soon took over, engaging the proper response that accelerates and merges in front of \POV{1}. However, \POV{2} started braking while \SV{} was accelerating; this violates the GA-RSS assumption and thus made \SV{} follow CA-RSS \BC{} and brake in  Lane 1. When \POV{2} was done braking at time \SI{3.5}{\second}, the control was given back to GA-RSS \BC{}, which found that the same ``accelerate and merge in front of \POV{1}'' proper response is safely executable. The controller engaged the proper response, successfully merging in front of \POV{1} and reaching the stopping position.

In the second notable instance (the video is at \url{https://bit.ly/3wMRbPQ}), 
we used the same initial positions and velocities of the vehicles, setting $\ytgt=120$ [\SI{}{\metre}] and making \POV{2} brake from  \SI{1.5}{\second} to \SI{3.5}{\second} (that is longer), at the same rate.

 The simulation proceeded initially much like the first notable instance, but longer braking by \POV{2} made the original ``accelerate and merge in front of \POV{1}'' proper response'' proper response no longer safety executable. Therefore,  the control is given back to GA-RSS \BC{} after  \POV{2}'s  braking, the controller engaged a different proper response, namely the one that brakes and merges behind \POV{1}. This way \SV{} successfully reached the stopping position.


\begin{auxproof}
 \subsection{Notable instances}

 We noted two notable instance.
 \begin{itemize}
  \item In the first notable instance, \SV{} starts in a position in which it
        can merge between \POV{1} and \POV{2}. However \POV{2} will brake during
        the time interval $[1.5, 3.5]$, disrupting the precondition for merging
        between \POV{1} and \POV{2}. The concrete scenario instance parameters
        are as follows:
        \[
          \begin{array}{l}
            y_{1}(0) = -2, \ y_{2}(0) = 30, \ y_{3}(0) = 90, \ \xtgt = 120,\\
            v(0) = 14, \ v_{i}(0) = 10.
          \end{array}
        \]
        Initially the \AC{} is in control, however it does not accelerate enough
        to reach the target while merging between \POV{1} and \POV{2}. The \BC{}
        takes over to engage in accelerating. During this proper response,
        \POV{2} starts braking, which leads to the \BC{} braking as a proper
        response because the constant speed assumption for the \POV{}'s has been
        violated. When \POV{2} has finished braking, the precondition for
        merging in front of \POV{1} no longer holds. \AC{} is in control briefly,
        before \BC{} takes over again, this time to brake so as to ensure it
        merges behind \POV{1} in time for coming to a stop at the target in lane
        3.
  \item In the second notable instance, \SV{} starts in a position in which it
        can merge between \POV{1} and \POV{2}. It is disrupted by \POV{2}
        braking during the time interval $[1, 1.5]$, however after this the
        precondition for merging between \POV{1} and \POV{2} still holds.
        The concrete scenario instance parameters
        are as follows:
        \[
          \begin{array}{l}
            y_{1}(0) = -2, \ y_{2}(0) = 30, \ y_{3}(0) = 90, \ \xtgt = 180,\\
            v(0) = 14, \ v_{i}(0) = 10.
          \end{array}
        \]
        The simulation proceeds initially much like in the first notable
        instance. The difference is that after \POV{2} finishes braking, the
        precondition for the rule for merging in front of \POV{1} still holds.
        The proper response for accelerating and merging between \POV{1} and
        \POV{2} is activated and the \SV{} successfully reaches the target
        location.
 \end{itemize}
\end{auxproof}

\section{Conclusions}

We have defined a logic to formally define and prove properties of
safety architectures for ADVs.
We have applied it to the simplex and layered simplex architectures in
several case studies, and experimentally confirmed its usefulness.





%



\bibliographystyle{IEEEtran}
\bibliography{myrefs_shorten}


\end{document}